\let\C\relax
\newcommand{\fre}[1]{\cos\left(\frac{2\pi f_{#1}t}{T} + \theta_{#1}\right)}
\newtheorem{theorem}{Theorem}[section]
\newtheorem{lemma}[theorem]{Lemma}
\newtheorem{corollary}[theorem]{Corollary}
\newtheorem{remark}[theorem]{Remark}
\newtheorem{claim}[theorem]{Claim}
\newcommand{\wh}{\widehat}
\newcommand{\ov}{\overline}
\newcommand{\R}{\mathbb{R}}
\renewcommand{\i}{\mathbf{i}}
\renewcommand{\varepsilon}{\epsilon}
\renewcommand{\hat}{\wh}
\renewcommand{\R}{\mathbb{R}}
\DeclareMathOperator*{\C}{\mathbb{C}}
\DeclareMathOperator{\poly}{poly}
\DeclareMathOperator{\diag}{diag}
\newcommand*{\RN}[1]{\expandafter\@slowromancap\romannumeral #1@}
\newcommand{\define}[4][ignore]{%
  \ifstrequal{#1}{ignore}{}{
  \@namedef{thmtitle@#2}{#1}}%
  \@namedef{thm@#2}{#4}%
  \@namedef{thmtypen@#2}{lemma}%
  \newtheorem{thmtype@#2}[theorem]{#3}%
  \newtheorem*{thmtypealt@#2}{#3~\ref{#2}}%
}
\newcommand{\state}[1]{%
  \@namedef{curthm}{#1}
  \@ifundefined{thmtitle@#1}{
  \begin{thmtype@#1}
    }{
  \begin{thmtype@#1}[\@nameuse{thmtitle@#1}]
  }
    \label{#1}
    \@nameuse{thm@#1}
  \end{thmtype@#1}
  \@ifundefined{thmdone@#1}{
  \@namedef{thmdone@#1}{stated}%
  }{}
}
\newcommand{\restate}[1]{%
  \@namedef{curthm}{#1}
  \@ifundefined{thmtitle@#1}{
    \begin{thmtypealt@#1}
    }{
  \begin{thmtypealt@#1}[\@nameuse{thmtitle@#1}]
  }
    \@nameuse{thm@#1}
  \end{thmtypealt@#1}
  \@ifundefined{thmdone@#1}{
  \@namedef{thmdone@#1}{stated}%
  }{}
}
\newcommand{\thmlabel}[1]{
  \@ifundefined{thmdone@\@nameuse{curthm}}{\label{#1}
    }{\tag*{\eqref{#1}}}
}
\newcommand{\DefMacro}[2]{\expandafter\newcommand\csname rmk-#1\endcsname{#2}}
\newcommand{\UseMacro}[1]{\csname rmk-#1\endcsname}
\def\DeltaScale#1#2#3{%
    \FPeval\result{clip(((#1)-(#2))*(#3))}%
\relax{\result}%
}
\begin{document}
\date{}
\title{Learning Long Term Dependencies via Fourier Recurrent Units}
\author{
  Jiong Zhang\\
  \texttt{zhangjiong724@utexas.edu}\\
  UT-Austin
  \and
  Yibo Lin\\
  \texttt{yibolin@utexas.edu}\\
  UT-Austin
  \and
  Zhao Song\thanks{Part of the work was done while hosted by Jelani Nelson.}\\
  \texttt{zhaos@g.harvard.edu}\\
  Harvard University \& UT-Austin
  \and
  Inderjit S. Dhillon\\
  \texttt{inderjit@cs.utexas.edu}\\
  UT-Austin
}


\begin{titlepage}
 \maketitle
  \begin{abstract}
It is a known fact that training recurrent neural networks for tasks 
that have long term dependencies is challenging. One of the main reasons is the vanishing or 
exploding gradient problem, which prevents gradient information from 
propagating to early 
layers. In this paper we propose a simple recurrent architecture, the Fourier Recurrent 
Unit (FRU), that stabilizes the gradients that arise in its training while giving 
us stronger expressive power. Specifically, FRU summarizes 
the hidden states $h^{(t)}$ along the temporal dimension with Fourier basis functions. This 
allows gradients to easily reach any layer due to FRU's residual learning 
structure and the global support of 
trigonometric functions. We show that FRU has gradient lower and upper bounds 
independent of temporal dimension. We also show the strong expressivity of 
sparse Fourier basis, from which FRU obtains its strong expressive power. Our 
experimental study also demonstrates that with fewer parameters the proposed architecture 
outperforms other recurrent architectures on many tasks.

  \end{abstract}
 \thispagestyle{empty}
 \end{titlepage}

\section{Introduction}\label{sec:intro}
Deep neural networks~(DNNs) have shown remarkably better 
performance than classical models on a wide range of problems, including 
speech recognition, computer vision and natural language processing. 
Despite DNNs having tremendous expressive power to fit 
very complex functions, training them by back-propagation
can be difficult. Two main issues are vanishing and exploding gradients. 
These issues become particularly troublesome for recurrent neural 
networks~(RNNs) since the weight matrix is identical at each layer and
any small changes get amplified exponentially through the recurrent 
layers~\cite{bengio1994learning}. 
Although exploding gradients can be somehow mitigated by tricks like gradient 
clipping or normalization~\cite{pascanu2013difficulty}, vanishing gradients are 
harder to deal with. If gradients vanish, there is little information 
propagated back through back-propagation. This means that deep RNNs have great difficulty 
learning long-term dependencies. 

Many models have been proposed to address the vanishing/exploding gradient 
issue for DNNs. For example Long Short Term 
Memory~(LSTM)~\cite{hochreiter1997long} tries to solve it by adding additional 
memory gates, while residual networks~\cite{he2016deep} add a short cut to 
skip intermediate layers. Recently the approach of directly obtaining the 
statistical summary of past layers has drawn attention, such as statistical 
recurrent units~(SRU)~\cite{oliva2017statistical}. However, as we show later, they still suffer 
from vanishing gradients and have limited access to past layers.

In this paper, we present a novel recurrent architecture, 
Fourier Recurrent Units~(FRU) that use Fourier basis 
to summarize the hidden statistics over past time steps. We show that this solves the 
vanishing gradient problem and gives us access to any past time step region. In 
more detail, we make the following contributions: 
\begin{itemize}
    \item We propose a method to summarize hidden states through past time 
    steps in a recurrent neural network with Fourier basis~(FRU). Thus any 
      statistical summary of past hidden states can be approximated by a linear 
      combination of summarized Fourier statistics.
    \item Theoretically, we show the expressive power of sparse Fourier basis 
      and prove that FRU can solve the vanishing gradient problem by looking 
      at gradient norm bounds. Specifically, we show that in the linear setting, SRU only
      improves the gradient lower/upper bound of RNN by a constant factor of the 
      exponent~(i.e, both have the form $(e^{a T},e^{b T})$), while FRU (lower and upper) bounds the gradient by constants 
      independent of the temporal dimension.
    \item We tested FRU together with RNN, LSTM and SRU on both synthetic and 
    real world datasets like pixel-(permuted) MNIST, IMDB movie rating dataset. FRU shows its 
    superiority on all of these tasks while enjoying smaller number of 
    parameters than LSTM/SRU. 
\end{itemize}

We now present the outline of this paper. In Section~\ref{sec:related_work} we discuss 
related work, while in Section~\ref{sec:motiv} we introduce the FRU 
architecture and explain the intuition regarding the statistical summary and 
residual learning. In Sections~\ref{sec:expower} and~\ref{sec:vangrad}
we prove the expressive power of sparse Fourier basis and show that in the linear 
case FRUs have constant lower and upper bounds on gradient magnitude.
Experimental results on benchmarking synthetic datasets as well as real datasets 
like pixel MNIST and language data are presented in 
Section~\ref{sec:exp}. Finally, we present our conclusions and suggest several 
interesting directions in Section~\ref{sec:conclu}. 

\section{Related Work}\label{sec:related_work}
Numerous studies have been conducted hoping to address the vanishing and 
exploding gradient problems, such as the use of self-loops and gating units in the 
LSTM~\cite{hochreiter1997long} and GRU~\cite{cho2014properties}. These models use
trained gate units on inputs or memory states to keep the memory for a longer 
period of time thus enabling them to capture longer term dependencies than RNNs.
However, it has also been argued that by using a simple initialization trick, RNNs 
can have better performance than LSTM on some benchmarking tasks~\cite{le2015simple}.
Apart from these advanced frameworks, straight forward methods like gradient 
clipping~\cite{mikolov2012statistical} and spectral 
regularization~\cite{pascanu2013difficulty} are also proposed. 

As brought to wide notice in Residual networks~\cite{he2016deep}, give MLP 
and CNN shortcuts to skip intermediate layers allowing gradients to flow back and 
reach the first layer without being diminished. It is also claimed this helps to
preserve features that are already good. Although ResNet is 
originally developed for MLP and CNN architectures, many extensions to RNN 
have shown improvement, such as maximum entropy 
RNN~(ME-RNN)~\cite{mikolov2011strategies}, highway 
LSTM~\cite{zhang2016highway} and Residual LSTM~\cite{kim2017residual}.

Another recently proposed method, the statistical recurrent unit~(SRU)~\cite{oliva2017statistical}, 
keeps moving averages of summary statistics through past time steps. 
Rather than use gated units to decide what should be memorized, at each layer 
SRU memory cells incorporate new information at rate $\alpha$ and forget old 
information by rate $(1-\alpha)$. Thus by linearly combining 
multiple memory cells with different $\alpha$'s, SRU can have a multi-scale 
view of the past. However, 
the weight of moving averages is exponentially decaying through time and will 
surely go to zero given enough time steps. This prevents SRU from accessing the 
hidden states a few time steps ago, and allows gradients to vanish. Also, the 
expressive power of the basis of exponential functions is small which 
limits the expressivity of the whole network. 

Fourier transform is a strong mathematical tool that has been successful in 
many applications. However the previous studies of Fourier expressive power 
have been concentrate in dense Fourier transform. Price and Song \cite{ps15} 
proposed a way to define $k$-sparse Fourier transform problem in the continuous 
setting and also provided an algorithm which requires the frequency gap. Based 
on that \cite{ckps16} proposed a frequency gap free algorithm and well defined 
the expressive power of $k$-sparse Fourier transform. One of the key 
observations in the frequency gap free algorithm is that a low-degree polynomial has 
similar behavior as Fourier-sparse signal. To understand the expressive power 
of Fourier basis, we use the framework designed by \cite{ps15} and use the 
techniques from \cite{ps15,ckps16}. 

There have been attempts to combine the Fourier transform with RNNs:
the Fourier RNN~\cite{koplon1997using} uses $e^{ix}$ as activation function in 
RNN model;
ForeNet~\cite{zhang2000forenet} notices the similarity between 
Fourier analysis of time series and RNN predictions and arrives at an RNN with 
diagonal transition matrix. For CNN, the FCNN~\cite{pratt2017fcnn} replaces sliding window approach
with the Fourier transform in the convolutional layer. Although some of these 
methods show improvement over current ones, they have not fully exploit the 
expressive power of Fourier transform or avoided the gradient 
vanishing/exploding issue. Motivated by the shortcomings of the above methods, we have developed a method 
that has a thorough view of the past hidden states, has strong 
expressive power and does not suffer from the gradient vanishing/exploding problem.

{\bf Notation.} We use $[n]$ to denote $\{1,2,\cdots,n\}$. We provide several 
definitions related to matrix $A$. Let $\det(A)$ denote the determinant of a 
square matrix $A$, and $A^\top$ denote the transpose of $A$. Let $\| A\|$ denote 
the spectral norm of matrix $A$, and let $A^t$ denote the square matrix $A$ multiplied 
by itself $t-1$ times. Let $\sigma_i(A)$ denote the $i$-th largest singular 
value of $A$. For any function $f$, we define $\widetilde{O}(f)$ to be 
$f\cdot \log^{O(1)}(f)$. In addition to $O(\cdot)$ notation, for two functions 
$f,g$, we use the shorthand $f\lesssim g$ (resp. $\gtrsim$) to indicate that 
$f\leq C g$ (resp. $\geq$) for an absolute constant $C$. We use $f\eqsim g$ to 
mean $cf\leq g\leq Cf$ for constants $c$ and $C$.
Appendix provides the detailed proofs and additional experimental results for comparison.

\section{Fourier Recurrent Unit}\label{sec:motiv}
In this section, we first introduce our notation in the RNN framework and then describe our 
method, the Fourier Recurrent Unit~(FRU), in detail. Given a hidden state 
vector from the previous time step $h^{(t-1)}\in\R^{n_h}$, input 
$x^{(t-1)}\in \R^{n_i}$, RNN computes the next hidden state $h^{(t)}$ and 
output $y^{(t)}\in\R^{n_y}$ as:
\begin{align}\label{eq:rnn_update_rule}
    h^{(t)} &= \phi(W \cdot h^{(t-1)}+U \cdot x^{(t-1)}+b) &\in \R^{n_h}\\
    y^{(t)} &= Y \cdot h^{(t)} &\in\R^{n_y}\notag
\end{align}
where $\phi$ is the activation, $W\in\R^{n_h\times n_h}$, 
$U\in\R^{n_h\times n_i}$ and $Y\in\R^{n_y\times n_h}$, $t=1,2,\ldots,T$ is the time step and $h^{(t)}$ 
is the hidden state at 
step $t$. In RNN, the output $y^{(t)}$ at each step is locally dependent to 
$h^{(t)}$ and only remotely linked with previous hidden states (through multiple 
weight matrices and activations). This give rise to the idea of directly 
summarizing hidden states through time. 

\paragraph{Statistical Recurrent Unit.}
For each $t \in \{1,2,\cdots,T\}$, \cite{oliva2017statistical} propose SRU 
with the following update rules 
\begin{align}\label{eq:sru_update_rule}
  g^{(t)} = & ~ \phi( W_1 \cdot u^{(t-1)} + b_1 ) &\in \R^{n_g} \notag \\
  h^{(t)} = & ~ \phi(W_2 \cdot g^{(t)} + U \cdot x^{(t-1)} + b_2 ) &\in \R^{n_h} \notag \\
  u^{(t)}_i = & ~ D \cdot u^{(t-1)}_i + (I - D)\cdot(\mathbf{1}\otimes I) \cdot h^{(t)}\\
  y^{(t)} = & ~ Y \cdot u^{(t)} & \in \R^{n_y} \notag 
\end{align}
where $\mathbf{1}\otimes I=[I_{n_h},\ldots,I_{n_h}]^\top$.
Given the decay factors $\alpha_k \in (0,1),k=1,2\cdots K$, the 
decaying matrix $D\in\R^{Kn_h\times Kn_h}$ is:
$$  
D = \diag\left(\alpha_1 I_{n_h}, \alpha_2 I_{n_h},\cdots,\alpha_K I_{n_h}\right).
$$
For each $i\in [Kn_h]$ and 
$t>0$, $u^{(t)}_i$ can be expressed as the summary statistics across previous 
time steps with the corresponding $\alpha_k$: 
\begin{align}\label{eq:sru_summary}
  u^{(t)}_{i} = \alpha_k^{t} \cdot u^{(0)}_{i} + (1-\alpha_k) \sum_{\tau=1}^{t} \alpha_k^{t-\tau} \cdot h^{(\tau)}.
\end{align}
However, it is easy to note from \eqref{eq:sru_summary} that the weight on 
$h^{(\tau)}$ vanishes 
exponentially with $t-\tau$, thus the SRU cannot access hidden states from a 
few time steps ago. As we show later in 
section~\ref{sec:vangrad}, the statistical factor only improves the gradient lower 
bound by a constant factor on the exponent and still suffers from vanishing 
gradient. Also, the span of exponential functions has limited 
expressive power and thus linear combination of entries of $u^{(t)}$ also have 
limited expressive power.

\paragraph{Fourier Recurrent Unit.}
Recall that Fourier expansion indicates that a continuous function 
$F(t)$ defined on $[0,T]$ can be expressed as:{
\begin{align*}
  F(t)= & ~ A_0 + \frac{1}{T}\sum_{k=1}^N A_k \cos\left(\frac{2\pi k t}{T} + \theta_k \right)
\end{align*}}
where $\forall k\in[N]$: {
\begin{align*}
  A_k = & ~ \sqrt{a_k^2 + b_k^2}, \theta_k = \text{arctan}(b_k,a_k)\\
  a_k = & ~ 2 \langle F(t), \cos\left( \frac{2\pi k t}{T} \right) \rangle, b_k = ~ 2 \langle F(t), \sin\left( \frac{2\pi k t}{T} \right) \rangle,
\end{align*}}
where $\langle a,b\rangle=\int_0^T a(t)b(t)dt$.
To utilize the strong 
expressive power of Fourier basis, we propose the Fourier recurrent unit model.
Let $f_1, f_2, \cdots, f_K$ denote a set of $K$ frequencies.
For each $t \in \{1,2,\cdots,T\}$, we have the following update rules
\begin{align}\label{eq:fru_update_rule}
  g^{(t)} = & ~ \phi( W_1 \cdot u^{(t-1)} + b_1 ) &\in \R^{n_g} \notag \\
  h^{(t)} = & ~ \phi(W_2 \cdot g^{(t)} + U \cdot x^{(t-1)} + b_2 ) &\in \R^{n_h} \notag \\
  u^{(t)} = & ~ u^{(t-1)} + \frac{1}{T} C^{(t)} \cdot h^{(t)} &\in \R^{n_u} \\
  y^{(t)} = & ~ Y \cdot u^{(t)} & \in \R^{n_y} \notag
\end{align}
where $C^{(t)}\in\R^{n_u\times n_h}$ is the Cosine matrix containing $m$ 
square matrices: {
\begin{align*}
  C^{(t)}
  =
\begin{bmatrix}
  C^{(t)}_1  &
  C^{(t)}_2  &
    \cdots   &
  C^{(t)}_M  
\end{bmatrix}^\top,
\end{align*}}
and each $C^{(t)}_j$ is a diagonal matrix with cosine at $m=\frac{K}{M}$ 
distinct frequencies evaluated at time step $t$: {
\begin{align*}
  C^{(t)}_j
  = \diag\left( \fre{k_1} I_d, \cdots, \fre{k_2} I_d\right)
\end{align*}}
where $k_1 = m(j-1)+1$, $k_2 = mj$ and $d$ is the dimension for each 
frequency. For every $t,j,k>0$, $i=d(k-1)+j$ 
the entry $u^{(t)}_i$ has the expression:
\begin{align}\label{eq:fru_summary}
u^{(t)}_i = u^{(0)}_i + \frac{1}{T} \sum_{\tau=1}^t \cos\left(\frac{2\pi f_k \tau}{T} + \theta_k\right) \cdot h^{(\tau)}_j
\end{align}

\begin{figure}[t]
  \includegraphics[width=\linewidth]{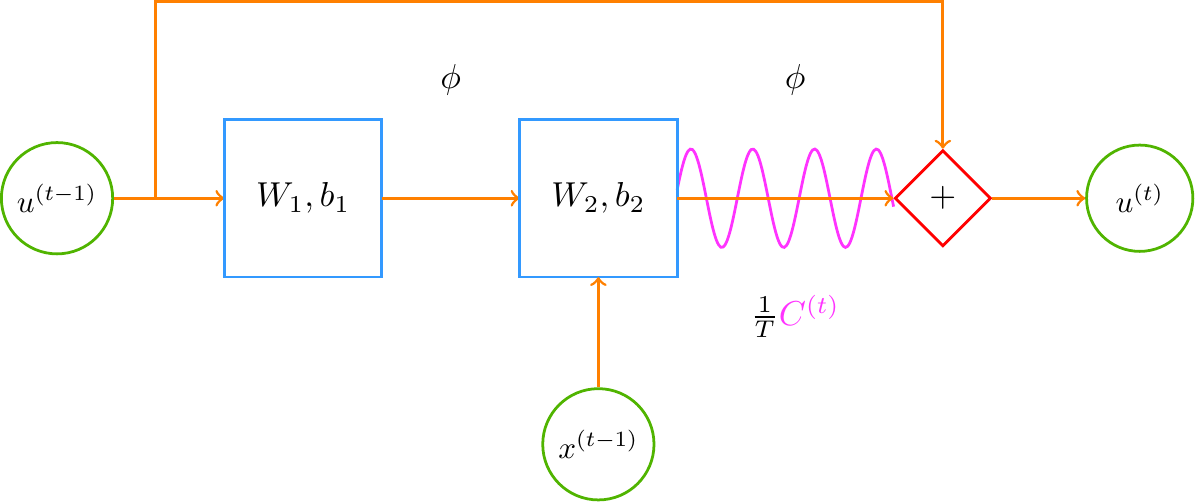}
  \caption{The Fourier Recurrent Unit}
  \label{fig:fru}
\end{figure}

As seen from \eqref{eq:fru_summary}, due to the  
global support of trigonometric functions, we can directly link $u^{(t)}$ with 
hidden states at any time step. Furthermore, because of the expressive power of 
the Fourier basis, given enough frequencies, $y^{(t)}=Y\cdot u^{(t)}$ can 
express any summary statistic of previous hidden states. As we will prove in 
later sections, these features 
prevent FRU from vanishing/exploding gradients and give it much stronger expressive 
power than RNN and SRU. 

\paragraph{Connection with residual learning.}

Fourier recurrent update of $u^{(t)}$ can also be written as:
\begin{align}
  u^{(t+1)} = & ~ u^{(t)} + \mathcal{F}(u^{(t)}) \notag\\
  \mathcal{F}(u^{(t)}) = & ~ \frac{1}{T} C^{(t+1)} \phi(W_2 \phi( W_1 u^{(t)} + b_1 ) + U x^{(t)} + b_2 ) \notag
\end{align}
Thus the information flows from layer $(t-1)$ to layer $t$ along two paths. The second term, $u^{(t-1)}$ needs to pass two 
layers of non-linearity, several weight matrices and scaled down by $T$, while 
the first term, $u^{(t-1)}$ directly goes to $u^{(t)}$ with only identity 
mapping. 
Thus FRU directly incorporates the idea of residual learning while limiting the 
magnitude of the residual term. This not only helps the 
information to flow more smoothly along the temporal dimension, but also acts 
as a regularization that makes the gradient of adjacent layers to be close to identity:
\begin{align}
  \frac{\partial u^{(t+1)}}{\partial u^{(t)}} = & ~ I + \frac{\partial 
  \mathcal{F}}{\partial u^{(t)}}. \notag
\end{align}
Intuitively this solves the gradient exploding/vanishing issue. 
Later in Section~\ref{sec:vangrad}, we give a formal proof and comparison with SRU/RNN. 

\section{Fourier Basis}\label{sec:expower}
\begin{figure*}[t]
	\begin{center}
    {\includegraphics[width=.9\textwidth]{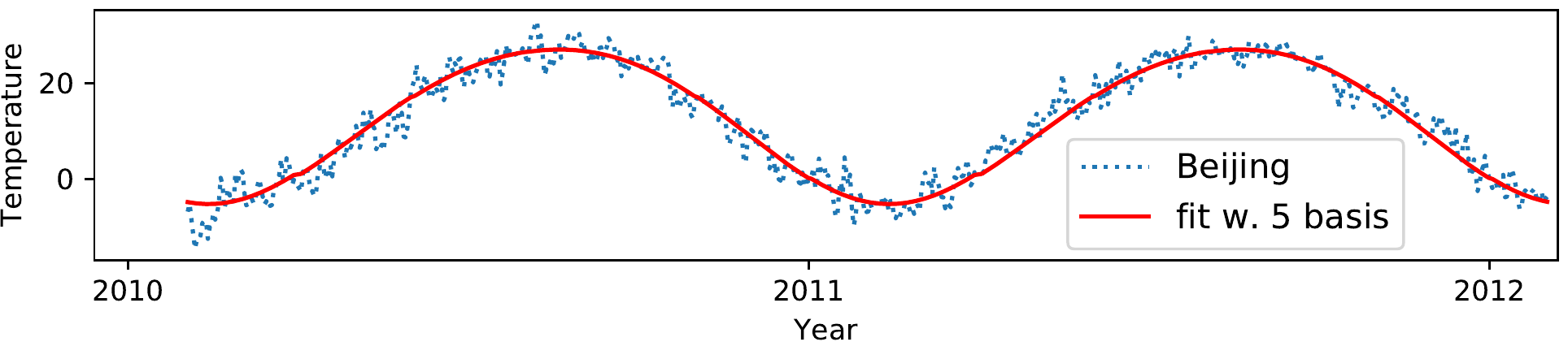}}\label{yearly_Beijing5} \\
    
    {\includegraphics[width=.9\textwidth]{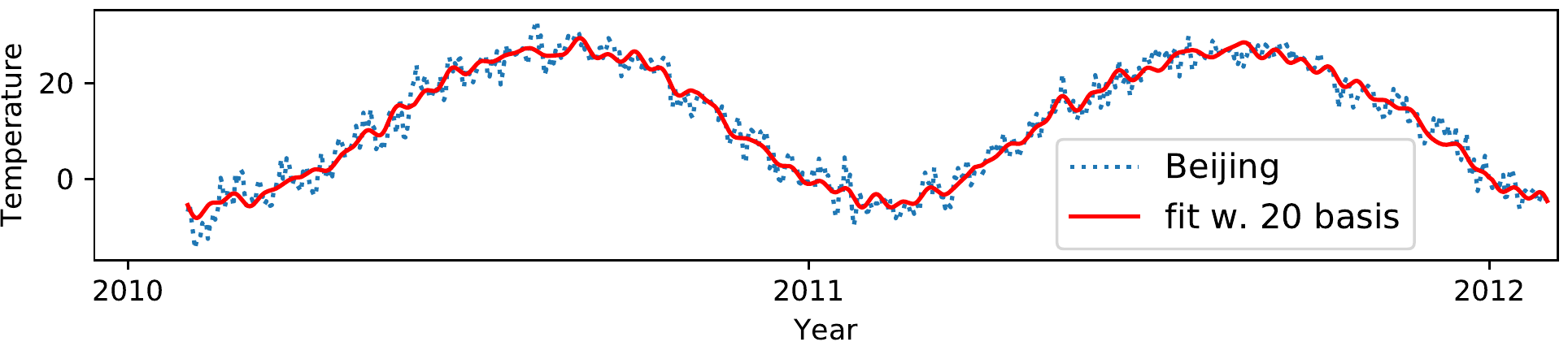}}\label{yearly_Beijing20}\\
    
    {\includegraphics[width=.9\textwidth]{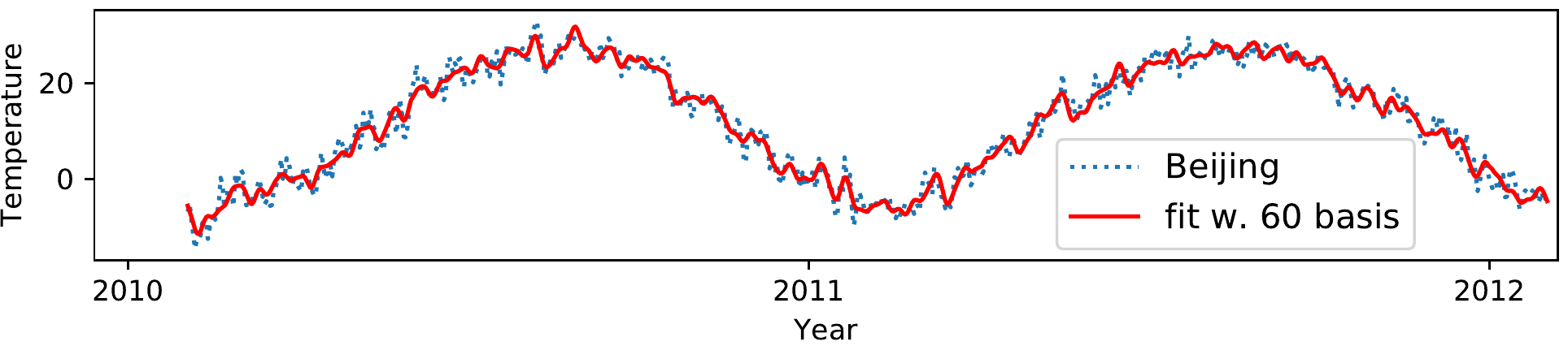}}\label{yearly_Beijing60} \\
   
    {\includegraphics[width=.9\textwidth]{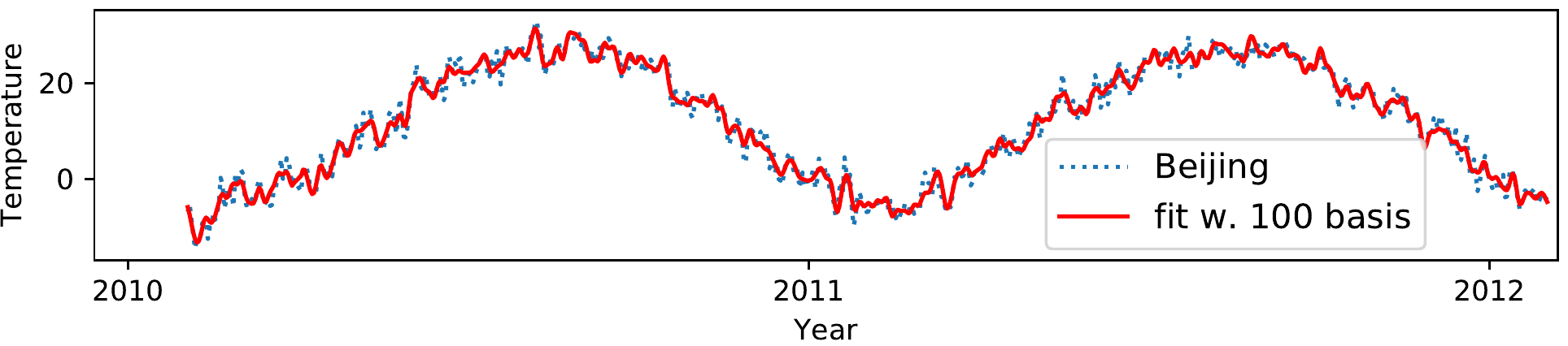}}\label{yearly_Beijing100}
    
    \caption{~Temperature changes of Beijing from year 2010 to 2012, and the fit with Fourier basis: (a) 5 Fourier basis; (b) 20 Fourier basis; (c) 60 Fourier basis; (d) 100 Fourier basis. }
    \end{center}
\end{figure*}
In this section we show that FRU has stronger expressive power than SRU by 
comparing the expressive power of limited number of Fourier basis (sparse 
Fourier basis) and exponential functions. 
On the one hand, we show that sparse Fourier basis is able to approximate 
polynomials well. On the other hand, we prove that even infinitely many exponential 
functions cannot fit a constant degree polynomial. 

First, we state several basic facts which will be later used in the proof.
\begin{lemma}\label{lem:vandermonde}
Given a square Vandermonde matrix $V$ where $V_{i,j} = \alpha_{i}^{j-1}$, then
$
\det(V) = \prod_{1\leq i < j \leq n} (\alpha_j - \alpha_i).
$
\end{lemma}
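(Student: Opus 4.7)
The plan is to prove the Vandermonde determinant identity by induction on $n$, using the standard polynomial-in-one-variable argument. This is a classical result, so the challenge is less mathematical and more expositional, and I would aim for the shortest clean argument.

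First I would dispose of the base case $n = 1$, where $\det(V) = 1$ and the product on the right is empty, hence also $1$. For the inductive step, I would fix $\alpha_1, \ldots, \alpha_{n-1}$ and view $P(x) := \det(V)$ with $\alpha_n$ replaced by an indeterminate $x$ as a polynomial in $x$. Expanding the determinant along the last row, $P(x)$ has degree at most $n - 1$ in $x$, with leading term $c \cdot x^{n-1}$ where $c$ is the cofactor of the $(n, n)$ entry. By the definition of the Vandermonde layout, this cofactor is precisely the determinant of the $(n-1) \times (n-1)$ Vandermonde matrix on $\alpha_1, \ldots, \alpha_{n-1}$, so by the inductive hypothesis $c = \prod_{1 \leq i < j \leq n - 1}(\alpha_j - \alpha_i)$.

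Next I would identify the roots of $P$. For each $i \in \{1, \ldots, n-1\}$, setting $x = \alpha_i$ makes rows $i$ and $n$ of the matrix identical, so $P(\alpha_i) = 0$. Since $P$ has degree at most $n-1$ with the $n-1$ distinct roots $\alpha_1, \ldots, \alpha_{n-1}$ (I can assume they are distinct, since otherwise both sides of the target identity vanish and the claim is trivial), we conclude
\begin{equation*}
P(x) = c \prod_{i=1}^{n-1}(x - \alpha_i).
\end{equation*}
Substituting $x = \alpha_n$ and inserting the formula for $c$ gives
\begin{equation*}
\det(V) = \Bigl(\prod_{1 \leq i < j \leq n-1}(\alpha_j - \alpha_i)\Bigr) \prod_{i=1}^{n-1}(\alpha_n - \alpha_i) = \prod_{1 \leq i < j \leq n}(\alpha_j - \alpha_i),
\end{equation*}
completing the induction.

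I do not expect a genuine obstacle here; the one minor technical point is justifying that the leading coefficient $c$ really equals the smaller Vandermonde determinant, which requires being careful that the cofactor's sign is positive (it is, since the $(n, n)$-cofactor carries sign $(-1)^{n+n} = +1$) and that the submatrix after deleting the last row and last column is exactly the Vandermonde matrix on $\alpha_1, \ldots, \alpha_{n-1}$ with the same convention $V_{i,j} = \alpha_i^{j-1}$. An alternative route would be column operations — subtracting $\alpha_1$ times column $j$ from column $j+1$ from right to left to annihilate the first row past the first entry, then factoring $(\alpha_i - \alpha_1)$ out of each subsequent row — but the polynomial argument above is shorter and avoids tracking signs through multiple steps.
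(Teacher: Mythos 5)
Your proof is correct: the induction via treating $\det(V)$ as a degree-$(n-1)$ polynomial in $\alpha_n$, identifying its $n-1$ roots and its leading coefficient as the smaller Vandermonde determinant, is the standard argument and all steps (including the sign of the $(n,n)$-cofactor and the degenerate case of repeated nodes) check out. Note, however, that the paper offers no proof of this lemma at all---it is stated as a known basic fact and used as a black box in Claims~\ref{cla:P_1_is_Q} and~\ref{cla:P_2_is_small}---so there is no in-paper argument to compare yours against; your write-up simply supplies a proof the authors chose to omit.
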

Also recall the Taylor expansion of $\sin (x)$ and $\cos (x)$ is
\begin{align*}
\sin (x) = & ~ \sum_{i=0}^{\infty} \frac{ (-1)^i }{ (2i+1) ! } x^{2i+1}, \cos (x) = \sum_{i=0}^{\infty} \frac{ (-1)^i }{ (2i) ! } x^{2i}.
\end{align*}

\subsection{Using Fourier Basis to Interpolate Polynomials}
\cite{ckps16} proved an interpolating result which uses Fourier basis (
$e^{2\pi \i f t}$, $\i =\sqrt{-1}$) to fit a complex polynomial ($Q(t): \R \rightarrow \C$). 
However 
in our application, the target polynomial is over 
the real domain, i.e. $Q(t): \R \rightarrow \R$. Thus, we 
only use the real part of the Fourier basis. We extend the proof 
technique from previous work to our new setting, and obtain the following result,
\begin{lemma}
For any $2d$-degree polynomial $Q(t) = \sum_{j=0}^{2d} c_j t^j \in \R$, any $T > 0$ and any $\epsilon > 0$, there always exists frequency $f > 0$ (which depends on $d$ and $\epsilon$) and
$
x^*(t) = \sum_{i=1}^{d+1} \alpha_i \cos (2\pi f i t) + \beta_i \cos (2\pi f i t + \theta_i)
$
with coefficients $\{\alpha_i,\beta_i\}_{i=0}^d$ such that
$
\forall t \in [0,T], | x^*(t) - Q(t) | \leq \epsilon.
$
\end{lemma}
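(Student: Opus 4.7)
The plan is to reduce the approximation problem to solving two decoupled finite linear systems using the Taylor expansions of $\cos$ and $\sin$, and then send the frequency $f$ to $0$ to control the tail.

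First I would reparameterize the ansatz. Using the angle-addition formula, $\cos(2\pi f i t + \theta_i) = \cos\theta_i \cos(2\pi f i t) - \sin\theta_i \sin(2\pi f i t)$, so
\[
x^*(t) = \sum_{i=1}^{d+1}\left[p_i \cos(2\pi f i t) + q_i \sin(2\pi f i t)\right]
\]
for some $(p_i,q_i)\in\R^2$, and every such $(p_i,q_i)$ is realizable by a choice of $(\alpha_i,\beta_i,\theta_i)$. So it suffices to find real $p_i,q_i$ making $x^*$ close to $Q$.

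Next I would Taylor-expand each basis function using the formulas stated in the excerpt and collect powers of $t$. The even-degree part of $x^*(t)$ contributes only the $p_i$'s, and the odd-degree part only the $q_i$'s, so matching $Q(t)$ coefficient-wise up to degree $2d$ decouples into two systems: for $k=0,1,\ldots,d$,
\[
\sum_{i=1}^{d+1} p_i \frac{(-1)^k(2\pi f i)^{2k}}{(2k)!} = c_{2k},
\]
and for $k=0,1,\ldots,d-1$ an analogous system with $q_i$, $(2k+1)!$, and $c_{2k+1}$ (the one remaining degree of freedom among the $q_i$'s I would fix by setting $q_{d+1}=0$). Up to diagonal rescalings, both coefficient matrices are Vandermonde in the distinct values $i^2$, $i=1,\ldots,d+1$, so by Lemma~\ref{lem:vandermonde} the determinant is nonzero and unique solutions $p_i,q_i$ exist. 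A careful count shows $|p_i|,|q_i|\lesssim f^{-2d}\cdot\mathrm{poly}(d)\cdot\max_j|c_j|$, since inverting the Vandermonde pulls out $f^{-2d}$.

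Finally I would bound the Taylor tail $|x^*(t)-Q(t)|$ on $[0,T]$. Since matching was exact up to degree $2d$, the error consists only of terms of the form $p_i(2\pi f i)^{2k}t^{2k}/(2k)!$ with $k\geq d+1$ (and analogous sine terms). Pulling out the $f^{-2d}$ scaling of $p_i$ and pairing with the leading $(2\pi f i T)^{2d+2}/(2d+2)!$ factor in the tail yields an overall bound of order $f^2 \cdot T^{2d+2}$ times a constant depending only on $d$, $\|c\|_\infty$, and $T$. Taking $f$ sufficiently small (depending on $d$, $\epsilon$, $T$, and $\|c\|_\infty$) drives this below $\epsilon$ uniformly in $t\in[0,T]$.

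The main obstacle is the third step: showing that the $f^{-2d}$ blow-up of the interpolation coefficients is strictly dominated by the $f^{2d+2}$ suppression from the next Taylor term, so that the tail actually vanishes as $f\to 0$. This requires an explicit norm bound on the inverse Vandermonde matrix (which one can obtain from the closed-form entries in terms of elementary symmetric polynomials in $1^2,2^2,\ldots,(d{+}1)^2$) and a uniform geometric bound on the series $\sum_{k\geq d+1}(2\pi f i T)^{2k}/(2k)!$. Once these two quantitative estimates are in place, the rest of the argument is routine linear algebra and bookkeeping.
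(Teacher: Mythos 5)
Your proposal is correct and takes essentially the same route as the paper's proof: both convert the phased cosines into a cosine/sine pair, match the Taylor coefficients of $Q$ exactly by inverting the Vandermonde systems in $1^2,\dots,(d+1)^2$ (Lemma~\ref{lem:vandermonde}), and then drive the Taylor tail below $\epsilon$ by taking $f\to 0$. Your explicit balancing of the $f^{-2d}$ blow-up of the interpolation coefficients against the $f^{2d+2}$ suppression of the next Taylor term is precisely the content of the paper's Claim~\ref{cla:P_2_is_small}.
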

\begin{proof}
First, we define $\gamma_j$ as follows
\begin{align*}
\gamma_{2j} =  \sum_{i=1}^{d+1} i^{2j} \alpha_i, \gamma_{2j+1} =  \sum_{i=1}^{d+1} i^{2j+1}\beta_i.
\end{align*}

Using Claim~\ref{cla:fourier_rewrite_x*}, we can rewrite $x^*(t)$
\begin{align*}
x^*(t) = Q(t) + ( P_1(t) - Q(t) ) + P_2(t)
\end{align*}
where {
\begin{align*}
P_1(t) = &\sum_{j=0}^{d} \frac{ (-1)^j }{ (2j) ! } (2 \pi f t)^{2j} \gamma_{2j}  + \sum_{j=0}^d \frac{ (-1)^j }{ (2j+1) ! } (2 \pi f t )^{2j+1} \gamma_{2j+1}, \\
P_2(t) = &\sum_{j=d+1}^{\infty} \frac{ (-1)^j }{ (2j) ! } (2 \pi f t)^{2j} \gamma_{2j} + \sum_{j=d+1}^{\infty} \frac{ (-1)^j }{ (2j+1) ! } (2 \pi f t )^{2j+1} \gamma_{2j+1}.
\end{align*}}
It suffices to show $P_1 (t) = Q(t)$ and $|P_2 (t)| \leq \epsilon, \forall t \in [0,T]$. We first show $P_1 (t) = Q(t)$,
\begin{claim}\label{cla:P_1_is_Q}
For any fixed $f$ and any fixed $2d+2$ coefficients $c_0, c_1, \cdots, c_{2d+1} $, there exists $2d+2$ coefficients $\alpha_0, \alpha_1, \cdots, \alpha_{d}$ and $\beta_0, \beta_1, \cdots, \beta_d$ such that, for all $t$,
$P_1(t) = Q(t)$.
\end{claim}
\begin{proof}
Recall the definition of $Q(t)$ and $P_1(t)$, the problem becomes an regression problem. To guarantee $Q(t) = P_1(t), \forall t$. For any fixed $f$ and coefficients $c_0,\cdots,c_{2d+1}$, we need to solve a linear system with $2d+2$ unknown variables $\gamma_0, \gamma_1, \cdots, \gamma_{2d+1}$ and $2d+2$ constraints : $\forall j \in \{0,1,\cdots, d\}$, 
\begin{align*}
  \frac{(-1)^j}{(2j)!} (2\pi f)^{2j} \gamma_{2j} = c_{2j}, \frac{(-1)^j}{(2j + 1)!} (2\pi f)^{2j+1} \gamma_{2j+1} = c_{2j+1}.
\end{align*}
Further, we have
$
  \frac{(-1)^j}{(2j)!} (2\pi f)^{2j} \sum_{i=1}^{d+1} i^{2j} \alpha_{i-1} = c_{2j}$ and $\frac{(-1)^j}{(2j + 1)!} (2\pi f)^{2j+1} \sum_{i=1}^{d+1} i^{2j+1} \beta_{i-1} = c_{2j+1}.
$

Let $A \in \R^{(d+1) \times (d+1)}$ denote the Vandermonde matrix where $A_{i,j} = (i^2)^j, \forall i,j \in [d+1] \times \{0,1,\cdots,d\}$. Using Lemma~\ref{lem:vandermonde}, we know $\det( A ) \neq 0$, then there must exist a solution to $A \alpha = c_{\text{even}}$.

Let $B \in \R^{(d+1) \times (d+1)}$ denote the Vandermonde matrix where $B_{i,j} = (i^{ (2j+1 / j) } )^j$, $\forall i,j \in [d+1] \times \{0,1,\cdots,d\}$. Using Lemma~\ref{lem:vandermonde}, we know $\det( B ) \neq 0$, then there must exist a solution to $B \beta = c_{\text{odd}}$.
\end{proof}
We can prove $|P_2(t)|\leq \epsilon, \forall t \in [0,T]$.(We defer the proof 
to Claim~\ref{cla:P_2_is_small} in Appendix~\ref{app:expower})
Thus, combining the Claim~\ref{cla:P_2_is_small} with Claim~\ref{cla:P_1_is_Q} completes the proof.
\end{proof}

\subsection{Exponential Functions Have Limited Expressive Power}

Given $k$ coefficients $c_1, \cdots, c_k \in \R$ and $k$ decay parameters $\alpha_1, \cdots, \alpha_k \in (0,1)$, we define function $x(t) = \sum_{i=1}^{k} c_i \alpha_i^t.$
We provide an explicit counterexample which is a degree-$9$ polynomial. Using that example, we are able to show the following result and defer the proof to Appendix~\ref{app:expower}.

\define{thm:exponential_decay_no_expressive_power}{Theorem}{
There is a polynomial $P(t) : \R \rightarrow \R$ with $O(1)$ degree such that, for any $k \geq 1$, for any $x(t) = \sum_{i=1}^{k} c_i \alpha_i^t$, for any $k$ coefficients $c_1,\cdots, c_k \in \R$ and $k$ decay parameters $\alpha_1,\cdots,\alpha_k \in (0,1)$ such that
\begin{align*}
\frac{1}{T} \int_{0}^T | P(t) -x(t)| \mathrm{d} t \gtrsim \frac{1}{T} \int_{0}^T |P(t)| \mathrm{d} t.
\end{align*}
}
\state{thm:exponential_decay_no_expressive_power}

\section{Vanishing and Exploding Gradients}\label{sec:vangrad}
In this section, we analyze the vanishing/exploding gradient issue in various
recurrent architectures. Specifically we give lower and upper bounds of 
gradient magnitude under the linear setting and show that the gradient of FRU does 
not explode or vanish with temporal dimension $T \rightarrow \infty$.
We first analyze RNN and SRU models as a baseline and show their gradients 
vanish/explode exponentially with $T$.
\paragraph{Gradient of linear RNN.}
For linear RNN, we have:
\begin{align*}
  h^{(t+1)} = W \cdot h^{(t)} + U \cdot x^{(t)} + b
\end{align*}
where $t=0,1,2\cdots T-1$. Thus
\begin{align*}
  h^{(T)} = & ~ W \cdot h^{(T-1)} + U \cdot x^{(T-1)} + b \\
  = & ~ W^{T-T_0} \cdot h^{(T_0)} + \sum_{t=T_0}^{T} W^{T-t-1} (U \cdot x^{(t)} + b)
\end{align*} 
Let $L = L(h^{(T)})$ denote the loss function. By Chain rule,
we have{
\begin{align*}
  \left\| \frac{\partial L}{\partial h^{(T_0)} } \right\| = & ~ \left\| \left( 
    \frac{ \partial h^{(T)} }{\partial h^{(T_0)}} \right)^\top \frac{\partial 
  L}{\partial h^{(T)}} \right\|  \\
  = & ~ \left\| (W^{T-T_0})^\top \cdot \frac{\partial L}{\partial h^{(T)} } \right\| \\
  \geq & ~ \sigma_{\min} ( W^{T-T_0}  ) \cdot \left\| \frac{\partial L}{\partial h^{(T)} } \right\|.
\end{align*}}
Similarly for the upper bound:{
\begin{align*}
  \left\| \frac{\partial L}{\partial h^{(T_0)} } \right\| \leq & ~ \sigma_{\max} ( W^{T-T_0}  ) \cdot \left\| \frac{\partial L}{\partial h^{(T)} } \right\|.
\end{align*}}

\paragraph{Gradient of linear SRU.}
For linear SRU, we have:
\begin{align*}
  h^{(t)} = & ~ W_1W_2 \cdot u^{(t-1)} + W_2b_1 + W_3 \cdot x^{(t-1)} +b_2,\\
  u^{(t)} = & ~ \alpha \cdot u^{(t-1)} + (1-\alpha) h^{(t)}.
\end{align*}
Denoting $W = W_1 W_2$ and $B = W_2 b_1 + b_2$, we have
\define{cla:sru_ut_is_sum}{Claim}{
Let $\ov{W}=\alpha I + (1-\alpha) W$. Then using SRU update rule, we have
  $u^{(T)} =  \ov{W}^{T-T_0} u^{(T_0)}  + \sum_{t=T_0}^T \ov{W}^{T-t-1} (1-\alpha) W_3 (  x^{(t)} + B) $.
 }
\state{cla:sru_ut_is_sum}
We provide the proof in Appendix~\ref{app:vangrad}.

With $L=L(u^{(T)})$, by Chain rule, we have the lower bound:{
\begin{align*}
  \left\| \frac{\partial L}{\partial u^{(T_0)} } \right\| = & ~ \left\|  ( (\alpha I + (1-\alpha)W)^\top )^{T-T_0} \frac{\partial L}{\partial u^{(T)}} \right\| \\
  \geq & ~ (\alpha + (1-\alpha)\sigma_{\min}(W))^{T-T_0}  \cdot \left\| \frac{\partial L}{\partial u^{(T)}} \right\|.
\end{align*}}
And similarly for the upper bound:{
\begin{align*}
  \left\| \frac{\partial L}{\partial u^{(T_0)} } \right\| \leq & ~ (\alpha  + (1-\alpha)\sigma_{\max}(W))^{(T-T_0)}  \cdot \left\| \frac{\partial L}{\partial u^{(T)}} \right\|.
\end{align*}}
These bounds for RNN and SRU are achievable, a simple example would be 
$W=\sigma I$. 
It is easy to notice that with $\alpha\in(0,1)$, SRUs have better gradient bounds 
than RNNs. However, SRUs is only better by a constant factor on the exponent and 
gradients for both methods could still explode or vanish 
exponentially with temporal dimension $T$.

\paragraph{Gradient of linear FRU.}
By design, FRU avoids vanishing/exploding gradient by its residual learning 
structure. Specifically, the linear \rm{FRU} has bounded gradient which is independent of the temporal 
dimension $T$. This means no matter how deep the network is, gradient of linear FRU 
would never vanish or explode. We have the following theorem:

\define{cla:fru_ut_is_sum}{Claim}{
$
u^{(T)} = \prod_{t=T_0+1}^T ( I + \frac{1}{T} \cos(2\pi f t/T + \theta)  W) u^{(T_0)} + \frac{1}{T} \sum_{t=T_0+1}^T \cos(2\pi f t / T + \theta) (W_3 x^{(t-1)} + B ).
$
}

\define{cla:fru_upper_bound_gradient}{Claim}{
$
\left\| \frac{\partial L}{ \partial u^{(T_0)} } \right\| \leq e^{ \sigma_{\max} (W) } \left\| \frac{\partial L}{\partial u^{(T)} } \right\|.
$
}

\begin{theorem}\label{thm:fru_gradient}
  With FRU update rule in \eqref{eq:fru_update_rule}, and $\phi$ being identity, we have: {
$
e^{- 2\sigma_{\max}(W_1W_2)} \left\| \frac{\partial L}{\partial u^{(T)}} \right\|
\leq \left\| \frac{\partial L}{ \partial u^{(T_0)} } \right\| \leq e^{ \sigma_{\max} (W_1W_2) } \left\| \frac{\partial L}{\partial u^{(T)}} \right\|
$}
for any $T_0\leq T$.
\end{theorem}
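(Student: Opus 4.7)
The plan is to exploit the fact that with $\phi$ equal to identity the FRU update becomes an affine recurrence in $u^{(t)}$, so the Jacobian $\partial u^{(T)}/\partial u^{(T_0)}$ is an explicit product of perturbations of the identity, and the bounds on $\partial L/\partial u^{(T_0)}$ reduce to spectral-norm bounds on this product and on its inverse. Concretely, I would first invoke Claim~\ref{cla:fru_ut_is_sum} to write
\begin{align*}
u^{(T)} = M(T,T_0)\, u^{(T_0)} + r(T,T_0), \qquad M(T,T_0) = \prod_{t=T_0+1}^{T}\Bigl( I + \tfrac{1}{T}\cos\!\bigl(\tfrac{2\pi f t}{T}+\theta\bigr) W \Bigr),
\end{align*}
where $W = W_1 W_2$ and $r(T,T_0)$ collects all terms independent of $u^{(T_0)}$. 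The chain rule then gives $\partial L/\partial u^{(T_0)} = M(T,T_0)^\top \, \partial L/\partial u^{(T)}$, so the theorem follows from showing $\|M(T,T_0)\| \leq e^{\sigma_{\max}(W)}$ and $\|M(T,T_0)^{-1}\| \leq e^{2\sigma_{\max}(W)}$.

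For the upper bound (which is the content of Claim~\ref{cla:fru_upper_bound_gradient}), I would use submultiplicativity of the spectral norm together with $|\cos(\cdot)|\le 1$, obtaining $\|I + \tfrac{1}{T}\cos(\cdot) W\| \leq 1 + \sigma_{\max}(W)/T$. Taking the product over $T-T_0 \leq T$ factors and applying the elementary inequality $(1+x/T)^T \leq e^x$ yields $\|M(T,T_0)\| \leq e^{\sigma_{\max}(W)}$, and therefore $\|\partial L/\partial u^{(T_0)}\| \leq e^{\sigma_{\max}(W)} \|\partial L/\partial u^{(T)}\|$.

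For the lower bound, I would instead bound $\|M(T,T_0)^{-1}\|$ from above. Once $T > \sigma_{\max}(W)$, each factor $I + \tfrac{1}{T}\cos(\cdot) W$ is invertible by the Neumann series and satisfies $\|(I + \tfrac{1}{T}\cos(\cdot) W)^{-1}\| \leq (1 - \sigma_{\max}(W)/T)^{-1}$. Multiplying gives $\|M(T,T_0)^{-1}\| \leq (1 - \sigma_{\max}(W)/T)^{-(T-T_0)} \leq (1 - \sigma_{\max}(W)/T)^{-T}$. Invoking the elementary inequality $(1-y)^{-1} \leq e^{2y}$ for $y \in [0,1/2]$ (which applies for $T \geq 2\sigma_{\max}(W)$) bounds this by $e^{2\sigma_{\max}(W)}$, and then $\|\partial L/\partial u^{(T_0)}\| \geq \|M(T,T_0)^{-\top}\|^{-1}\|\partial L/\partial u^{(T)}\| \geq e^{-2\sigma_{\max}(W)}\|\partial L/\partial u^{(T)}\|$, as required.

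The main obstacle I anticipate is handling the block structure of $C^{(t)}$ faithfully: Claim~\ref{cla:fru_ut_is_sum} is stated with a scalar cosine prefactor, but the full FRU has $C^{(t)}$ a vertically stacked collection of diagonal cosine matrices, so one must verify that the per-coordinate cosine prefactors still give $\sigma_{\max}(\tfrac{1}{T} C^{(t)} W_1 W_2) \leq \sigma_{\max}(W_1 W_2)/T$ (possibly after noting the block structure contributes only a bounded multiplicative constant, which can be absorbed into the exponent). A secondary minor point is verifying that the constant $2$ in the lower-bound exponent is tight with this argument; any looser inequality $(1-y)^{-1}\le e^{cy}$ with $c>2$ would require a correspondingly larger $T$ threshold, but the stated factor $2$ is exactly what the bound $(1-y)^{-1}\le e^{2y}$ for $y\le 1/2$ delivers.
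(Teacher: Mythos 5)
Your proposal is correct and follows the same skeleton as the paper's proof: unroll the linear recurrence via Claim~\ref{cla:fru_ut_is_sum} to get the Jacobian $M(T,T_0)=\prod_{t=T_0+1}^{T}(I+\tfrac{1}{T}\cos(\cdot)W)$, apply the chain rule, and bound the product factor by factor, with the upper bound via $1+x\le e^x$ exactly as in Claim~\ref{cla:fru_upper_bound_gradient}. The one genuine difference is the lower bound. The paper splits the time steps into $S_+$ and $S_-$ according to the sign of the cosine, uses $\sigma_{\min}(\prod_t A_t)\ge\prod_t\sigma_{\min}(A_t)$, and evaluates each factor by asserting $\sigma_{\min}(I+cW)=1+c\,\sigma_{\min}(W)$ for $c\ge 0$ (and with $\sigma_{\max}$ for $c<0$) before applying $1-x\ge e^{-2x}$; that equality holds for, say, symmetric positive semidefinite $W$ but not for general $W$. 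You instead bound $\lVert M^{-1}\rVert$ through the Neumann series, which only needs the inequality $\lVert(I+cW)^{-1}\rVert\le(1-|c|\,\sigma_{\max}(W))^{-1}$ and is therefore valid for arbitrary $W$ --- a slightly more robust route to the same constant, since $(1-y)^{-1}\le e^{2y}$ is the same elementary inequality the paper uses. Both arguments share the same unstated hypothesis that $T$ is large relative to $\sigma_{\max}(W)$ (the paper assumes $\sigma_{\max}<T$ and invokes $1-x\ge e^{-2x}$ on all of $[0,1]$, where it actually fails near $x=1$; your explicit restriction to $y\le 1/2$, i.e.\ $T\ge 2\sigma_{\max}(W)$, is the more careful version), and both silently work with the single-frequency scalar-cosine recurrence rather than the full block matrix $C^{(t)}$ of \eqref{eq:fru_update_rule} --- a gap you flag but the paper does not.
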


\begin{proof}
For linear FRU, we have:
\begin{align}
  h^{(t)} = & ~ W_1 W_2 \cdot u^{(t-1)} + W_2b_1 + W_3 x^{(t-1)} +b_2, \notag \\
  u^{(t)} = & ~  u^{(t-1)} + \frac{1}{T} \cos( 2\pi ft/T + \theta) h^{(t)}. \notag 
\end{align}
Let $W = W_1 W_2$ and $B = W_2 b_1 + b_2$, we can rewrite $u^{(T)}$ in the following way,
\state{cla:fru_ut_is_sum}
We provide the proof of Claim~\ref{cla:fru_ut_is_sum} in Appendix~\ref{app:vangrad}. By Chain rule{
\begin{align*}
   \left\| \frac{\partial L}{\partial u^{(T_0)}} \right\| 
  = & ~ \left\| \left( \frac{\partial h^{(T)}}{\partial h^{(T_0)}} \right)^\top \frac{\partial L}{\partial h^{(T)}} \right\| \\
  = & ~ \left\| \left(\prod_{t=T_0+1}^T (I + \frac{1}{T} \cos ( 2\pi f t /T + \theta ) \cdot W ) \right)^\top \frac{\partial L}{\partial u^{(T)}} \right\| \\
  \geq & ~ \sigma_{\min} \left( \prod_{t=T_0+1}^T (I + \frac{1}{T} \cos ( 2\pi f t /T + \theta ) \cdot W ) \right) \cdot \left\| \frac{\partial L}{\partial u^{(T)}} \right\| \\
  \geq & ~   \prod_{t=T_0+1}^T \sigma_{\min} (I + \frac{1}{T} \cos ( 2\pi f t /T + \theta ) \cdot W ) \cdot \left\| \frac{\partial L}{\partial u^{(T)}} \right\|.
\end{align*}}
We define two sets $S_-$ and $S_+$ as follows {
\begin{align*}
S_- = \{ t ~|~ \cos (2\pi f t /T + \theta) < 0, t = T_0+1, T_0+2, \cdots, T \}, \\
S_+ = \{ t ~|~ \cos (2\pi f t /T + \theta) \geq 0, t = T_0+1, T_0+2, \cdots, T \}.
\end{align*}}
Thus, we have{
\begin{align*}
  & ~ \prod_{t=T_0+1}^T \sigma_{\min} (I + \frac{1}{T} \cos ( 2\pi f t /T + \theta ) \cdot W ) \\
 = & ~ \prod_{t \in S_+} (1 + \frac{1}{T} \cos ( 2\pi f t /T + \theta ) \cdot \sigma_{\min} (W) ) \\
 & ~ \cdot \prod_{t \in S_-} (1 + \frac{1}{T} \cos ( 2\pi f t /T + \theta ) \cdot \sigma_{\max} (W) )
\end{align*}}
The first term can be easily lower bounded by $1$ and the question is how to lower bound the second term. Since $\sigma_{\max} < T$, we can use the fact $1 - x \geq e^{-2x}, \forall x \in [0,1]$,{
\begin{align*}
& ~\prod_{t \in S_-} (1 + \frac{1}{T} \cos ( 2\pi f t /T + \theta) \cdot \sigma_{\max} (W) ) \\
 \geq & ~ \prod_{t \in S_-} \exp( -2 \frac{1}{T} |\cos(2\pi f t/T + \theta)| \sigma_{\max}(W) ) \\
 = & ~ \exp \left( -2 \sum_{t \in S_-} \frac{1}{T} |\cos(2\pi f t/T + \theta)| \sigma_{\max}(W) \right) \\
 \geq & ~ \exp(- 2\sigma_{\max}(W)),
\end{align*}}
where the last step follows by $|S_-| \leq T$ and 
$|\cos(2\pi f t/T + \theta)| \leq 1$. Therefore, we complete the proof of 
lower bound. Similarly, we can show the following upper bound
\state{cla:fru_upper_bound_gradient}
We provide the proof in Appendix~\ref{app:vangrad}. Combining the lower bound and upper bound together, we complete the proof.
\end{proof}

\section{Experimental Results}\label{sec:exp}

\begin{figure}[!t]
    \centering
    \includegraphics[width=0.9\textwidth]{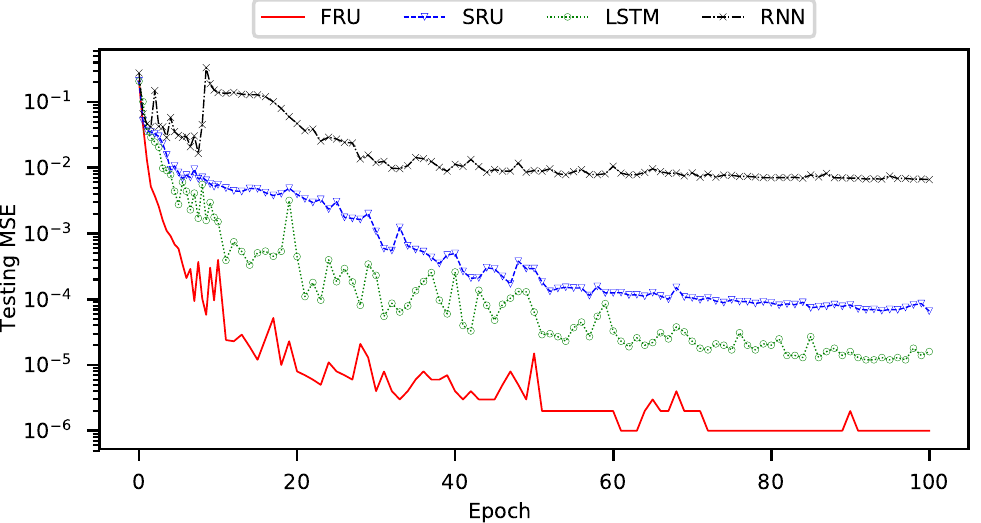}
    \caption{Test MSE of different models on mix-sin synthetic data. FRU uses $\textrm{FRU}_{120, 5}$.}
    \label{fig:sin5_synthetic}
\end{figure}

We implemented the Fourier recurrent unit in \texttt{Tensorflow} \cite{DBLP:tensorflow}
and used the standard implementation of \texttt{BasicRNNCell} and \texttt{BasicLSTMCell} for RNN and LSTM, respectively. 
We also used the released source code of SRU \cite{oliva2017statistical} and used the default configurations of $\{\alpha_i\}_{i=1}^5 = \{0.0, 0.25, 0.5, 0.9, 0.99\}$, $g_t$ dimension of 60, and $h^{(t)}$ dimension of 200. We release our codes on github\footnote{\url{https://github.com/limbo018/FRU}}.
For fair comparison, we construct one layer of above cells with 200 units in the experiments. 
Adam \cite{kingma2014adam} is adopted as the optimization engine. 
We explore learning rates in \{0.001, 0.005, 0.01, 0.05, 0.1\} and learning 
rate decay in \{0.8, 0.85, 0.9, 0.95, 0.99\}. The best results are reported 
after grid search for best hyper parameters.
For simplicity, we use $\textrm{FRU}_{k, d}$ to denote $k$ sampled sparse 
frequencies and $d$ dimensions for each frequency $f_k$ in a FRU cell.

\subsection{Synthetic Data}

We design two synthetic datasets to test our model: mixture of sinusoidal 
functions~(mix-sin) and mixture of polynomials~(mix-poly). 
For mix-sin dataset, we first construct $K$ components with each component 
being a combination of $D$ sinusoidal functions at different frequencies and 
phases~(sampled at beginning). Then, for each data point, we mix the $K$ 
components with randomly sampled weights.
Similarly, each data point in mix-poly dataset is a random mixture of $K$ fixed $D$ 
degree polynomials, with coefficients sampled at beginning and fixed. 
Alg.~\ref{alg:sin5_synthetic} and Alg.~\ref{alg:poly_synthetic} explain these 
procedures in detail. Among the sequences, $80\%$ are used for training and 
$20\%$ are used for testing. We picked sequence length $T$ to be 176, number 
of components $K$ to be 5 and degree $D$ to be 15 for mix-sin and $\{5,10,15\}$ 
for mix-poly.
At each time step $t$, models are asked to predict the sequence value 
at time step $t+1$. It requires the model to learn the $K$ underlying 
functions and uncover the mixture rates at beginning time steps. Thus we can 
measure the model's ability to express sinusoidal and polynomial functions as 
well as their long term memory. 

Figure~\ref{fig:sin5_synthetic} and~\ref{fig:poly_synthetic} plots the testing 
mean square error~(MSE) of different models on mix-sin/mix-poly datasets.
We use learning rate of 0.001 and learing rate decay of 0.9 for training. 
FRU achieves orders of magnitude smaller MSE than other models on mix-sin and mix-poly 
datasets, while using about half the number of 
parameters of SRU. This indicates FRU's ability to easily express these 
component functions.

\begin{figure}[t]
    \centering
    \includegraphics[width=0.9\textwidth]{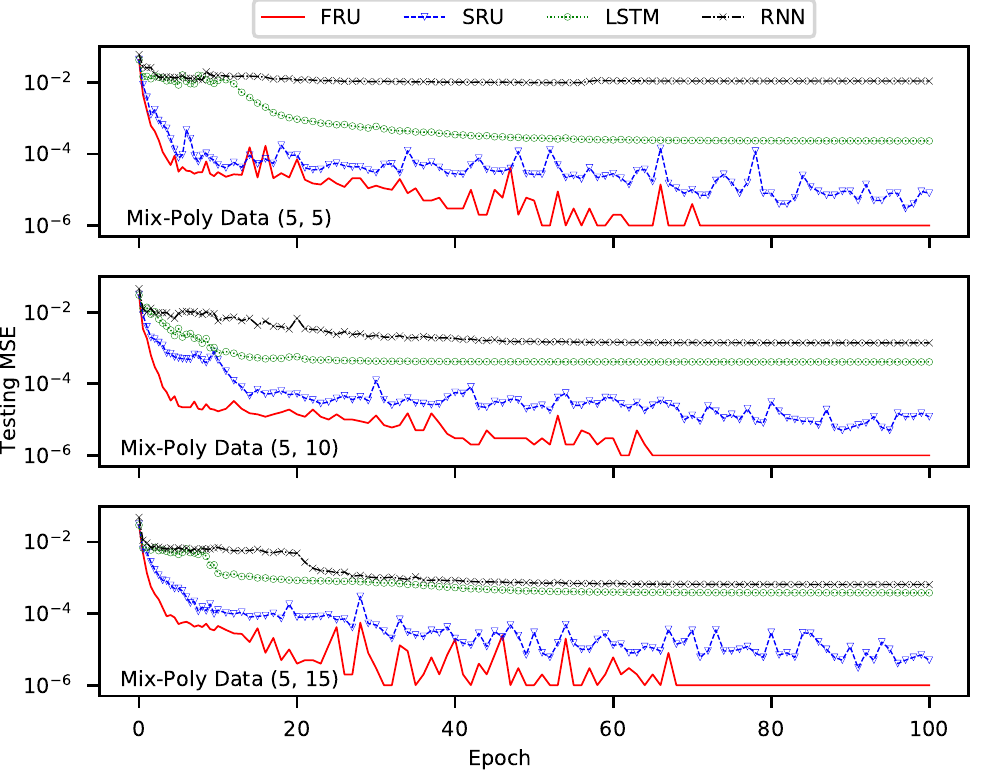}
    \caption{Test MSE of different models on mix-poly synthetic data with different maximum degrees of polynomial basis. FRU uses $\textrm{FRU}_{120, 5}$.}
    \label{fig:poly_synthetic}
\end{figure}

To explicitly demonstrate the gradient stability and ability to learn long 
term dependencies of different models, we analyzed the partial gradient at 
different distance. Specifically, we plot the partial derivative norm of error on digit 
$t$ w.r.t. the initial hidden state, i.e. 
$\frac{\partial (\hat{y}^{(t)}-y^{(t)})^2}{\partial h^{(0)}}$ where $y^{(t)}$ 
is label and $\hat{y}^{(t)}$ is model prediction.
The norms of gradients for FRU are very stable from $t=0$ to $t=300$. 
With the convergence of training, the amplitudes of gradient curves gradually decrease. 
However, the gradients for SRU decrease in orders of magnitudes with the increase of time steps, 
indicating that SRU is not able to capture long term dependencies.
The gradients for RNN/LSTM are even more unstable and the vanishing issues are rather severe. 

\begin{figure*}[!h]
    \centering
    \includegraphics[width=\textwidth]{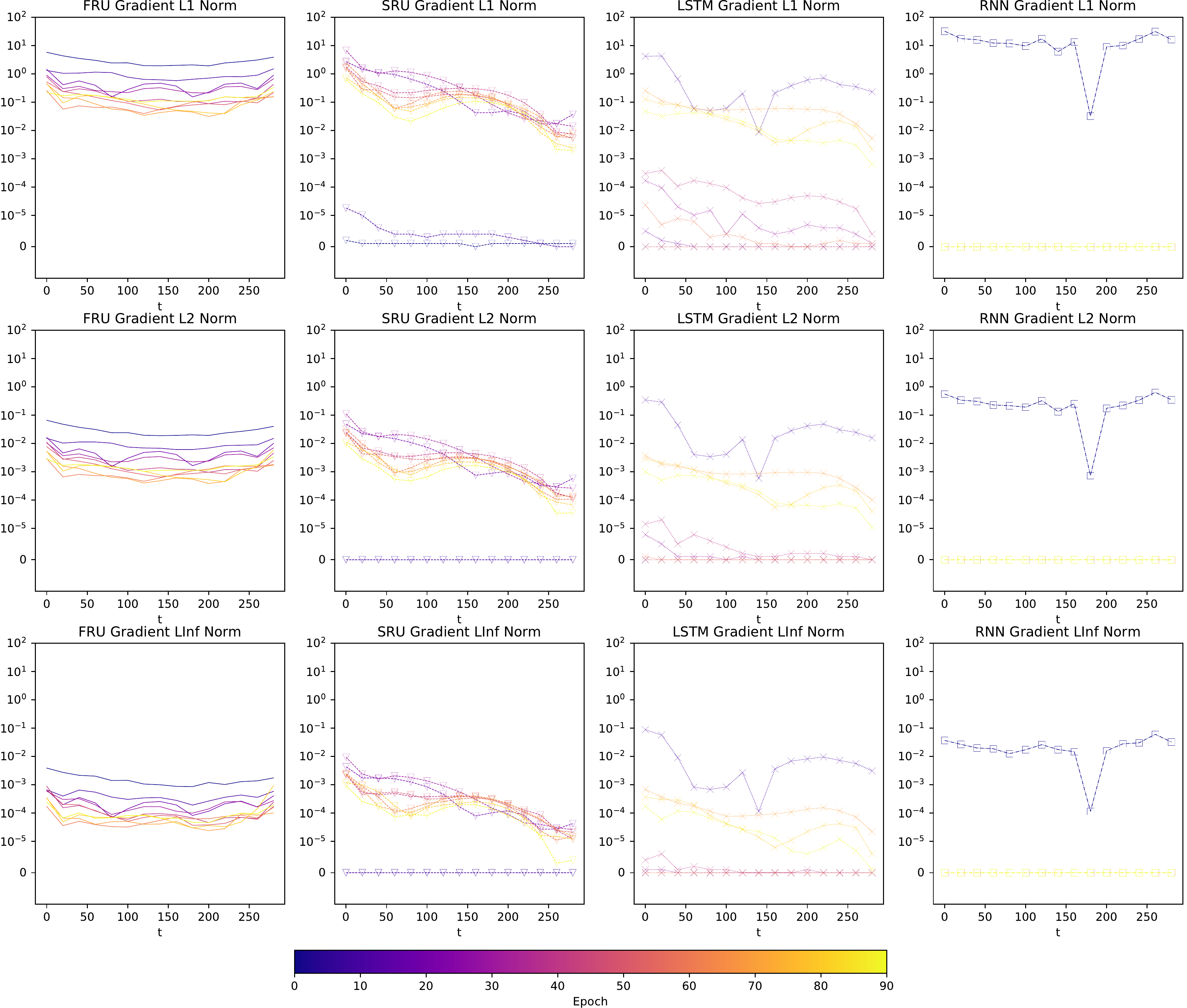}
    \caption{L1, L2, and L$\infty$ norms of gradients for different models on the training of mix-poly (5, 5) dataset.
        We evaluate the gradients of loss to the initial state with time steps, 
        i.e., $\frac{\partial (\hat{y}^{(t)}-y^{(t)})^2}{\partial h^{(0)}}$, 
        where $(\hat{y}^{(t)}-y^{(t)})^2$ is the loss at time step $t$.
        Each point in a curve is averaged over gradients at 20 consecutive time steps. 
        We plot the curves at epoch $0, 10, 20, \dots, 90$ with different colors from dark to light. 
        FRU uses $\textrm{FRU}_{120, 5}$ and SRU uses $\{\alpha_i\}_{i=1}^5 = \{0.0, 0.25, 0.5, 0.9, 0.95\}$.
    }
    \label{fig:poly_synthetic_degree5_npoly5_grads_time}
\end{figure*}



\subsection{Pixel-MNIST Dataset}

We then explore the performance of Fourier recurrent units in classifying MNIST dataset.
Each $28 \times 28$ image is flattened to a long sequence with a length of 784. 
The RNN models are asked to classify the data into 10 categories after being fed 
all pixels sequentially.
Batch size is set to 256 and dropout \cite{srivastava2014dropout} is not included in this experiment. 
A softmax function is applied to the 10 dimensional output at last layer of 
each model.
For FRU, frequencies $f$ are uniformly sampled in log space from 0 to 784.

\begin{figure}[t]
    \centering 
    \includegraphics[width=0.9\textwidth]{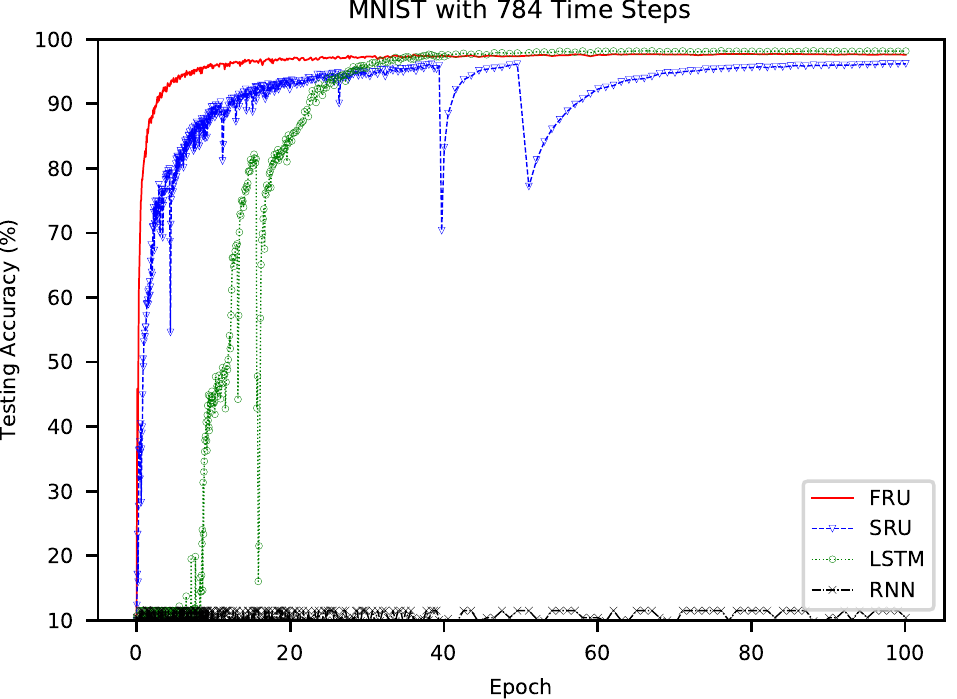}
    \caption{Testing accuracy of RNN, LSTM, SRU, and FRU for pixel-by-pixel MNIST dataset. 
    FRU uses $\textrm{FRU}_{60,10}$, i.e., 60 frequencies with the dimension of each frequency $f_k$ to be 10.}
    \label{fig:MNIST784}
\end{figure}

\DefMacro{MNIST784.RNN.var}{42K}
\DefMacro{MNIST784.RNN.var.ratio}{0.26}
\DefMacro{MNIST784.LSTM.var}{164K}
\DefMacro{MNIST784.LSTM.var.ratio}{1.00}
\DefMacro{MNIST784.SRU.var}{275K}
\DefMacro{MNIST784.SRU.var.ratio}{1.68}
\DefMacro{MNIST784.FRU.40.10.var}{107K}
\DefMacro{MNIST784.FRU.40.10.var.ratio}{0.65}
\DefMacro{MNIST784.FRU.60.10.var}{159K}
\DefMacro{MNIST784.FRU.60.10.var.ratio}{0.97}

\DefMacro{MNIST784.RNN.test}{10.39}
\DefMacro{MNIST784.LSTM.test}{98.17}
\DefMacro{MNIST784.SRU.test}{96.20}
\DefMacro{MNIST784.FRU.40.10.test}{96.88}
\DefMacro{MNIST784.FRU.60.10.test}{97.61}

\begin{table}[!t]
\centering
\caption{Testing Accuracy of MNIST Dataset}
\label{tab:MNIST784}
\small
\begin{tabular}{|c|c|c|c|}
\hline
Networks               & \begin{tabular}[c]{@{}c@{}}Testing\\ Accuracy\end{tabular} & \#Variables & \begin{tabular}[c]{@{}c@{}}Variable \\ Ratio\end{tabular} \\ \hline
RNN                    & \UseMacro{MNIST784.RNN.test}\%                                                    & \UseMacro{MNIST784.RNN.var}       & \UseMacro{MNIST784.RNN.var.ratio}       \\ \hline
LSTM                   & \UseMacro{MNIST784.LSTM.test}\%                                                   & \UseMacro{MNIST784.LSTM.var}      & \UseMacro{MNIST784.LSTM.var.ratio}      \\ \hline
SRU                    & \UseMacro{MNIST784.SRU.test}\%                                                    & \UseMacro{MNIST784.SRU.var}       & \UseMacro{MNIST784.SRU.var.ratio}       \\ \hline
$\textrm{FRU}_{40,10}$ & \UseMacro{MNIST784.FRU.40.10.test}\%                                              & \UseMacro{MNIST784.FRU.40.10.var} & \UseMacro{MNIST784.FRU.40.10.var.ratio} \\ \hline
$\textrm{FRU}_{60,10}$ & \UseMacro{MNIST784.FRU.60.10.test}\%                                              & \UseMacro{MNIST784.FRU.60.10.var} & \UseMacro{MNIST784.FRU.60.10.var.ratio} \\ \hline
\end{tabular}
\end{table}

Fig.~\ref{fig:MNIST784} plots the testing accuracy of different models during training. 
RNN fails to converge and LSTM converges very slow. 
The fastest convergence comes from FRU, which achieves over 97.5\% accuracy in 
10 epochs while LSTM reaches 97\% at around 40th epoch.
Table~\ref{tab:MNIST784} shows the accuracy at the end of 100 epochs for RNN, LSTM, SRU, and different configurations of FRU. 
LSTM ends up with \UseMacro{MNIST784.LSTM.test}\% in testing accuracy and SRU achieves \UseMacro{MNIST784.SRU.test}\%. 
Different configurations of FRU with 40 and 60 frequencies provide close accuracy to LSTM. 
The number and ratio  of trainable parameters are also illustrated in the table. 
The amount of variables for FRU is much smaller than that of SRU, and comparable to that of LSTM, 
while it is able to achieve smoother training and high testing accuracy. 
We ascribe such benefits of FRU to better expressive power and more robust to gradient vanishing from the Fourier representations.



\subsection{Permuted MNIST Dataset}

\begin{figure}[t]
    \centering 
    \includegraphics[width=0.9\textwidth]{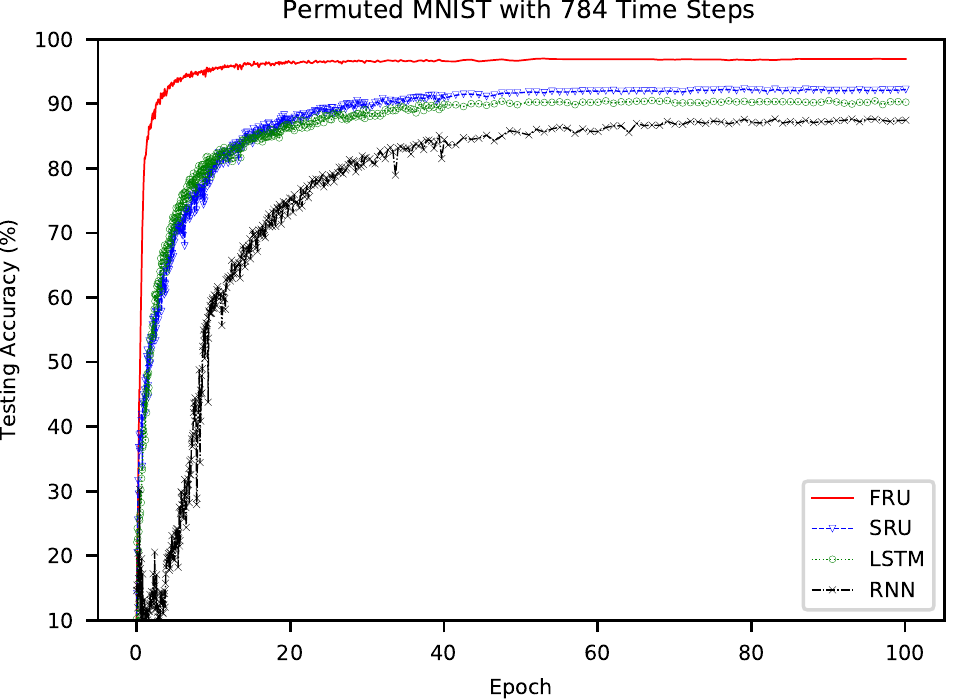}
    \caption{Testing accuracy of RNN, LSTM, SRU, and FRU for permuted pixel-by-pixel MNIST. 
    FRU uses 60 frequencies with the dimension of 10 for each frequency.}
    \label{fig:PermuteMNIST784}
\end{figure}

\DefMacro{PermuteMNIST784.RNN.test}{87.46}
\DefMacro{PermuteMNIST784.LSTM.test}{90.26}
\DefMacro{PermuteMNIST784.SRU.test}{92.21}
\DefMacro{PermuteMNIST784.FRU.test}{96.93}

\begin{table}[t]
    \centering
    \caption{Testing Accuracy of Permuted MNIST Dataset}
    \label{tab:PermuteMNIST784}
    \begin{tabular}{|c|c|c|c|}
        \hline
        RNN     & LSTM    & SRU     & FRU     \\ \hline
        \UseMacro{PermuteMNIST784.RNN.test}\% & \UseMacro{PermuteMNIST784.LSTM.test}\% & \UseMacro{PermuteMNIST784.SRU.test}\% & \UseMacro{PermuteMNIST784.FRU.test}\% \\ \hline
    \end{tabular}
\end{table}

We now use the same models as previous section and test on permuted MNIST dataset. 
Permute MNIST dataset is generated from pixel-MNIST dataset with a random but 
fixed permutation among its pixels.
It is reported the permutation increases the difficulty of classification \cite{arjovsky2016unitary}. 
The training curve is plotted in Fig.~\ref{fig:PermuteMNIST784} and the converged accuracy is shown in Table~\ref{tab:PermuteMNIST784}. 
We can see that in this task, FRU can achieve \DeltaScale{\UseMacro{PermuteMNIST784.FRU.test}}{\UseMacro{PermuteMNIST784.SRU.test}}{1}\% higher accuracy than SRU,  
\DeltaScale{\UseMacro{PermuteMNIST784.FRU.test}}{\UseMacro{PermuteMNIST784.LSTM.test}}{1}\% higher accuracy than LSTM, 
and \DeltaScale{\UseMacro{PermuteMNIST784.FRU.test}}{\UseMacro{PermuteMNIST784.RNN.test}}{1}\% higher accuracy than RNN. 
The training curve of FRU is smoother and converges much faster than other models. 
The benefits of FRU to SRU are more significant in permuted MNIST than that in 
the original pixel-by-pixel MNIST. This can be explained by higher model 
complexity of permuted-MNIST and stronger expressive power of FRU. 



\subsection{IMDB Dataset}

\begin{figure}[t]
    \centering 
    \includegraphics[width=0.9\textwidth]{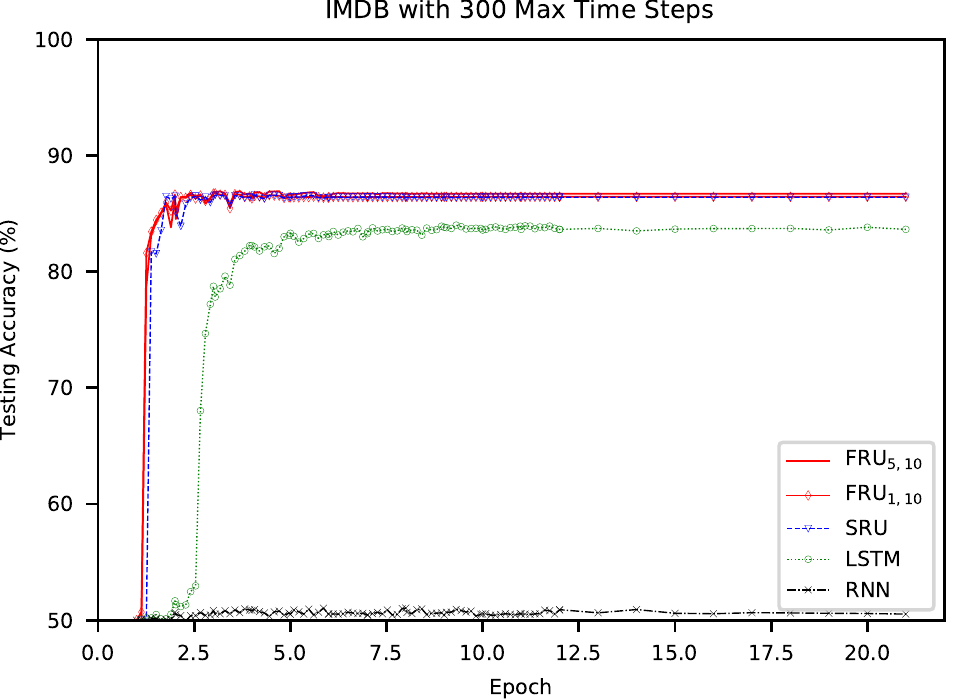}
    \caption{Testing accuracy of RNN, LSTM, SRU, and FRU for IMDB dataset. 
    $\textrm{FRU}_{5, 10}$ uses 5 frequencies with the dimension of 10 for each frequency $f_k$.
    $\textrm{FRU}_{1, 10}$ is an extreme case of FRU with only frequency 0. 
}
    \label{fig:IMDB300}
\end{figure}

\DefMacro{IMDB300.RNN.var}{33K}
\DefMacro{IMDB300.RNN.var.ratio}{0.25}
\DefMacro{IMDB300.LSTM.var}{132K}
\DefMacro{IMDB300.LSTM.var.ratio}{1.00}
\DefMacro{IMDB300.SRU.var}{226K}
\DefMacro{IMDB300.SRU.var.ratio}{1.72}
\DefMacro{IMDB300.FRU.var}{12K}
\DefMacro{IMDB300.FRU.var.ratio}{0.09}
\DefMacro{IMDB300.FRU.1.10.var}{4K}
\DefMacro{IMDB300.FRU.1.10.var.ratio}{0.03}

\DefMacro{IMDB300.RNN.test}{50.53}
\DefMacro{IMDB300.LSTM.test}{83.64}
\DefMacro{IMDB300.SRU.test}{86.40}
\DefMacro{IMDB300.FRU.test}{86.71}
\DefMacro{IMDB300.FRU.1.10.test}{86.44}

\begin{table}[t]
    \centering
    \caption{Testing Accuracy of IMDB Dataset}
    \label{tab:IMDB300}
    \begin{tabular}{|c|c|c|c|}
        \hline
    Networks & \begin{tabular}[c]{@{}c@{}}Testing\\ Accuracy\end{tabular} & \#Variables & \begin{tabular}[c]{@{}c@{}}Variable \\ Ratio\end{tabular} \\ \hline
        RNN                 & \UseMacro{IMDB300.RNN.test}\%                          & \UseMacro{IMDB300.RNN.var}       & \UseMacro{IMDB300.RNN.var.ratio}       \\ \hline
        LSTM                & \UseMacro{IMDB300.LSTM.test}\%                         & \UseMacro{IMDB300.LSTM.var}      & \UseMacro{IMDB300.LSTM.var.ratio}      \\ \hline
        SRU                 & \UseMacro{IMDB300.SRU.test}\%                          & \UseMacro{IMDB300.SRU.var}       & \UseMacro{IMDB300.SRU.var.ratio}       \\ \hline
$\textrm{FRU}_{5, 10}$      & \UseMacro{IMDB300.FRU.test}\%                          & \UseMacro{IMDB300.FRU.var}       & \UseMacro{IMDB300.FRU.var.ratio}       \\ \hline
$\textrm{FRU}_{1, 10}$      & \UseMacro{IMDB300.FRU.1.10.test}\%                     & \UseMacro{IMDB300.FRU.1.10.var}  & \UseMacro{IMDB300.FRU.1.10.var.ratio}  \\ \hline
\end{tabular}
\end{table}

We further evaluate FRU and other models with IMDB movie review dataset (25K training and 25K testing sequences). 
We integrate FRU and SRU into \texttt{TFLearn} \cite{tflearn2016}, a high-level API for \texttt{Tensorflow}, and test together with LSTM and RNN. 
The average sequence length of the dataset is around 284 and the maximum 
sequence length goes up to over 2800. We truncate all sequences to a length 
of 300. 
All models use a single layer with 128 units, batch size of 32, dropout keep rate of 80\%. 
FRU uses 5 frequencies with the dimension for each frequency $f_k$ as 10. 
Learning rates and decays are tuned separately for each model for best performance.

Fig.~\ref{fig:IMDB300} plots the testing accuracy of different models during training 
and Table~\ref{tab:IMDB300} gives the eventual testing accuracy. 
$\textrm{FRU}_{5, 10}$ can achieve \DeltaScale{\UseMacro{IMDB300.FRU.test}}{\UseMacro{IMDB300.SRU.test}}{1}\% higher accuracy than SRU, 
and \DeltaScale{\UseMacro{IMDB300.FRU.test}}{\UseMacro{IMDB300.LSTM.test}}{1}\% better accuracy than LSTM. 
RNN fails to converge even after a large amount of training steps. 
We draw attention to the fact that with 5 frequencies, FRU achieves the highest accuracy with 10X fewer variables than LSTM and 19X fewer variables than SRU, 
indicating its exceptional expressive power. 
We further explore a special case of FRU, $\textrm{FRU}_{1, 10}$, with only frequency 0, which is reduced to a RNN-like cell. 
It uses 8X fewer variables than RNN, but converges much faster and is able to achieve the second highest accuracy. 


Besides the experimental results above, Section~\ref{sec:More_Experimental_Results} in Appendix provides more experiments on 
different configurations of FRU for MNIST dataset, 
detailed procedures to generate synthetic data, 
and study of gradient vanishing during training. 

\section{Conclusion}\label{sec:conclu}
In this paper, we have proposed a simple recurrent architecture called the 
Fourier recurrent unit~(FRU), which 
has the structure of residual learning and exploits the expressive power 
of Fourier basis. We gave a proof of the expressivity of sparse Fourier basis 
and showed that FRU does not suffer from vanishing/exploding gradient in the linear 
case. Ideally, due to the global support of Fourier basis, FRU is able to 
capture dependencies of any length. We empirically showed FRU's ability to 
fit mixed sinusoidal and polynomial curves, and FRU 
outperforms LSTM and SRU on pixel MNIST dataset with fewer parameters. On 
language models datasets, FRU also shows its superiority over other RNN 
architectures. Although we now limit our models to recurrent structure, it would be
very exciting to extend the Fourier idea to help gradient issues/expressive power
for non-recurrent deep neural network, e.g. MLP/CNN.
It would also be interesting to 
see how other basis functions, such as polynomial basis, will behave on 
similar architectures. For example,
Chebyshev's polynomial is one of the interesting case to try.

\clearpage
\newpage

\bibliographystyle{alpha}
\bibliography{ref}

\newpage
\appendix
\newpage
\section*{Appendix}
\section{Preliminaries}\label{app:preli}
In this section we prove the equations \eqref{eq:sru_summary} and~\eqref{eq:fru_summary} and include more 
background of Sparse Fourier transform.
\begin{claim}\label{cla:sru_vTi_is_sum}
  With the SRU update rule in \eqref{eq:sru_update_rule}, for $i\in [k]$ we have:
  \begin{align*}
    v^{(t)}_{i} = \alpha_i^{t} \cdot v^{(0)}_{i} + (1-\alpha_i) \sum_{\tau=1}^{t} \alpha_i^{t-\tau} \cdot h^{(\tau)}
  \end{align*}
\end{claim}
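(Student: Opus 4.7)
The plan is to prove Claim~\ref{cla:sru_vTi_is_sum} by a straightforward induction on the time step $t$, directly unrolling the SRU update rule in \eqref{eq:sru_update_rule}. Since $D$ is block-diagonal with blocks $\alpha_k I_{n_h}$, the recurrence decouples across the $K$ groups of coordinates, so for each index $i$ corresponding to decay factor $\alpha_i$ we obtain a scalar-in-time recursion of the form $v^{(t)}_i = \alpha_i \cdot v^{(t-1)}_i + (1-\alpha_i) \cdot h^{(t)}$ (after multiplying by the $(\mathbf{1}\otimes I)$ broadcasting factor so that the same $h^{(t)}$ feeds each block). This is the starting point for the induction.

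For the base case $t=1$, the recurrence immediately gives $v^{(1)}_i = \alpha_i v^{(0)}_i + (1-\alpha_i) h^{(1)}$, which matches the claimed formula since $\alpha_i^1 v^{(0)}_i + (1-\alpha_i) \alpha_i^{0} h^{(1)}$ is the same expression. For the inductive step, I assume the identity holds at time $t$ and substitute it into the one-step recurrence at time $t+1$:
\begin{align*}
v^{(t+1)}_i
&= \alpha_i v^{(t)}_i + (1-\alpha_i) h^{(t+1)} \\
&= \alpha_i \Bigl[ \alpha_i^{t} v^{(0)}_i + (1-\alpha_i)\sum_{\tau=1}^{t} \alpha_i^{t-\tau} h^{(\tau)} \Bigr] + (1-\alpha_i) h^{(t+1)} \\
&= \alpha_i^{t+1} v^{(0)}_i + (1-\alpha_i) \sum_{\tau=1}^{t+1} \alpha_i^{t+1-\tau} h^{(\tau)},
\end{align*}
where the last line absorbs the trailing $(1-\alpha_i) h^{(t+1)}$ term into the sum using $\alpha_i^{(t+1)-(t+1)} = 1$. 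This closes the induction.

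There is no real obstacle here; the only subtlety worth flagging is the bookkeeping between the block/vector-level update written in \eqref{eq:sru_update_rule} and the per-coordinate scalar recurrence the claim is stated in. I would make this explicit once at the beginning by noting that the $i$-th block of $(I-D)(\mathbf{1}\otimes I) h^{(t)}$ equals $(1-\alpha_i) h^{(t)}$, so that the induction can be carried out coordinate-wise without carrying the Kronecker notation throughout. After that the verification is purely algebraic.
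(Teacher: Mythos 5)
Your proof is correct and follows essentially the same route as the paper, which simply unrolls the one-step recurrence $v^{(t)}_i = \alpha_i v^{(t-1)}_i + (1-\alpha_i)h^{(t)}$ repeatedly until reaching $v^{(0)}_i$; your explicit induction is just the formalized version of that unrolling. Your remark about the block-diagonal bookkeeping (that the $i$-th block of $(I-D)(\mathbf{1}\otimes I)h^{(t)}$ equals $(1-\alpha_i)h^{(t)}$) is a small clarification the paper leaves implicit.
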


\begin{proof}
We have 
\begin{align*}
  v_{i}^{(t)} 
= & ~ \alpha_i \cdot v_{i}^{(t-1)} + (1-\alpha_i) \cdot h^{(t)} \\
= & ~ \alpha_i( \alpha_i \cdot v_{i}^{(t-2)} + (1-\alpha_i) h^{(t-1)} ) + (1-\alpha_i) h^{(t)} \\
= & ~ \alpha_i^2 \cdot v_{i}^{(t-2)} + \alpha_i (1-\alpha_i) h^{(t-1)} + (1-\alpha_i) h^{(t)} \\
= & ~ \alpha_i^3 \cdot v_{i}^{(t-3)} + (1-\alpha_i) ( \alpha_i^2 h^{(t-2)} + \alpha_i h^{(t-1)} + h^{(t)} ) \\
= & ~ \cdots \\
= & ~ \alpha_i^{t} \cdot v_{i}^{(0)} + (1-\alpha_i) \sum_{\tau=1}^t \alpha_i^{t-\tau} \cdot h^{(\tau)},
\end{align*}
where the first step follows by definition of $v_{i}^{(t)}$, the second step follows by definition of $v_{i}^{(t-1)}$, and last step follows by applying the update rule (Eq.~\eqref{eq:sru_update_rule}) recursively.
\end{proof}


\begin{claim}\label{cla:fru_vTi_is_sum}
  With the FRU update rule in \eqref{eq:fru_update_rule}, we have:
  \begin{align*}
    u^{(t)}_i = u^{(0)}_i + \frac{1}{T} \sum_{\tau=1}^t \cos(\frac{2\pi f \tau}{T} + \theta) \cdot h^{(\tau)}
  \end{align*}
\end{claim}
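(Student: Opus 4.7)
The plan is to prove Claim~\ref{cla:fru_vTi_is_sum} by a straightforward unrolling of the FRU recurrence, in direct analogy with the argument for the SRU case in Claim~\ref{cla:sru_vTi_is_sum}. First I would observe that the update rule \eqref{eq:fru_update_rule} is coordinate-wise separable once we restrict attention to index $i$: for the entry associated with frequency $f$ and phase $\theta$ (i.e., the relevant diagonal entry of $C^{(t)}$ acting on the corresponding coordinate $h^{(t)}_j$ of the hidden state), the vector update collapses to the scalar recursion
\begin{align*}
u^{(t)}_i \;=\; u^{(t-1)}_i \;+\; \frac{1}{T}\,\cos\!\left(\frac{2\pi f t}{T}+\theta\right) h^{(t)}.
\end{align*}

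Next I would carry out the unrolling by an easy induction on $t$. The key structural feature is that the recursion is purely additive: there is no multiplicative coefficient on $u^{(t-1)}_i$, so applying the rule once, then again, then again, simply concatenates new cosine-weighted terms without altering any previously accumulated term. Writing this out,
\begin{align*}
u^{(t)}_i
& = u^{(t-1)}_i + \tfrac{1}{T}\cos\!\left(\tfrac{2\pi f t}{T}+\theta\right) h^{(t)} \\
& = u^{(t-2)}_i + \tfrac{1}{T}\cos\!\left(\tfrac{2\pi f (t-1)}{T}+\theta\right) h^{(t-1)} + \tfrac{1}{T}\cos\!\left(\tfrac{2\pi f t}{T}+\theta\right) h^{(t)} \\
& = \cdots \\
& = u^{(0)}_i + \frac{1}{T}\sum_{\tau=1}^{t} \cos\!\left(\frac{2\pi f \tau}{T}+\theta\right) h^{(\tau)},
\end{align*}
which is exactly the claimed identity. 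The base case $t=0$ is the empty sum and holds trivially; the inductive step is one more application of the update rule.

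The main ``obstacle'' in the proof is really a notational one rather than a mathematical one: one must carefully track which coordinate $j$ of $h^{(\tau)}$ the diagonal block $C^{(\tau)}_i$ selects, and which frequency-phase pair $(f,\theta)$ is associated with index $i$, so that the restriction to the scalar recursion above is justified. No decay factor propagates through the sum (in contrast with the SRU derivation, where each $h^{(\tau)}$ picks up a factor $\alpha_i^{t-\tau}$), so the telescoping is in fact simpler than its SRU counterpart and there is no genuine difficulty.
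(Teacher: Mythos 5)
Your proposal is correct and matches the paper's own proof, which likewise just unrolls the additive recursion $u^{(t)}_i = u^{(t-1)}_i + \frac{1}{T}\cos(2\pi f t/T + \theta)h^{(t)}$ step by step until reaching $u^{(0)}_i$. The coordinate-bookkeeping point you flag is the only subtlety, and the paper glosses over it in the same way.
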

\begin{proof}
\begin{align*}
 v_{i}^{(t)} = & ~ v_{i}^{(t-1)} + \frac{1}{T} \cos (2\pi f_i t /t + \theta_i) \cdot h^{(t)} \\
= & ~ v_{i}^{(t-2)} + \frac{1}{T} \cos (2\pi f_i (t-1) /t + \theta_i ) \cdot h^{(t-1)} + \frac{1}{T} \cos (2\pi f_i t / t + \theta_i ) \cdot h^{(t)} \\
= & ~ v_{i}^{(t-3)} + \frac{1}{T} \cos (2\pi f_i (t-2) /t + \theta_i ) \cdot h^{(t-2)} + \frac{1}{T} \cos (2\pi f_i (t-1) /t + \theta_i ) \cdot h^{(t-1)} \\
& ~ + \frac{1}{T} \cos (2\pi f_i t / t + \theta_i ) \cdot h^{(t)} \\
= & ~ \cdots \\
= & ~ v_{i}^{(0)} + \sum_{\tau = 1}^t \frac{1}{T} \cos(2\pi f_i \tau / t + \theta_i ) \cdot h^{(\tau)},
\end{align*}
where the first step follows by definition of $v_{i}^{(t)}$, the second step follows by definition of $v_{i}^{(t-2)}$, and the last step follows by applying the update rule~\eqref{eq:fru_update_rule} recursively.
\end{proof}

\subsection{Background of Sparse Fourier Transform}

The Fast Fourier transform (FFT) is widely used for many real applications.
FFT only requires $O(n \log n)$ time, when $n$ is the data size. Over the last decade,
there is a long line of work (e.g., \cite{hikp12a,hikp12b,ikp14,ik14}) aiming to improve the running time to nearly linear in $k$ where $k \ll n$ is the sparsity. For more details, we refer the readers to \cite{p13}. However, all of the previous work are targeting on the discrete Fourier transform. It is unclear how to solve the $k$-sparse Fourier transform in the continuous setting or even how to define the problem in the continuous setting. Price and Song \cite{ps15} proposed a reasonable way to define the problem, suppose $x^*(t) = \sum_{i=1}^k v_i e^{2\pi \i f_i t}$ is $k$-Fourier-sparse signal, we can observe some signal $x^*(t)+ g(t)$ where $g(t)$ is noise, the goal is to first find $(v'_i,f'_i)$ and then output a new signal $x'(t)$ such that 
\begin{align}\label{eq:fourier_guarantee}
\frac{1}{T} \int_0^T | x'(t) - x(t) |^2 \mathrm{d} t \leq O(1)\frac{1}{T} \int_0^T |g(t)|^2 \mathrm{d}t.
\end{align}
 \cite{ps15} provided an algorithm that requires $ T \geq (\poly\log k) / \eta$, and takes samples/running time in nearly linear in $k$ and logarithmically in other parameters, e.g. duration $T$, band-limit $F$, frequency gap $\eta$. \cite{m15} proved that if we want to find $(v'_i,f'_i)$ which is very close to $(v_i,f_i)$, then $T = \Omega(1/\eta)$. Fortunately, it is actually possible that interpolating the signal in a nice way (satisfying Eq.~\eqref{eq:fourier_guarantee}) without approximating the ground-truth parameters $(v_i,f_i)$ and requiring $T$ is at least the inverse of the frequency gap \cite{ckps16}.

\section{Expressive Power}\label{app:expower}

\begin{claim}\label{cla:P_2_is_small}
$| P_2(t) | \leq \epsilon, \forall t \in [0,T]$.
\end{claim}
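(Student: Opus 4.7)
\textbf{Proof Plan for Claim~\ref{cla:P_2_is_small}.} The plan is to show that $P_2(t)$ can be recognized as a linear combination of Taylor remainders of $\cos(2\pi f i t)$ and $\sin(2\pi f i t)$, and then to control the size of that combination by choosing the frequency $f$ to be sufficiently small.

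First, I would substitute the definitions $\gamma_{2j} = \sum_{i=1}^{d+1} i^{2j}\alpha_i$ and $\gamma_{2j+1} = \sum_{i=1}^{d+1} i^{2j+1}\beta_i$ into $P_2(t)$ and swap the order of summation, absorbing the factor $i^{2j}$ (respectively $i^{2j+1}$) into the powers of $2\pi f t$:
\begin{align*}
P_2(t) &= \sum_{i=1}^{d+1} \alpha_i \sum_{j=d+1}^\infty \frac{(-1)^j}{(2j)!}(2\pi f i t)^{2j} + \sum_{i=1}^{d+1} \beta_i \sum_{j=d+1}^\infty \frac{(-1)^j}{(2j+1)!}(2\pi f i t)^{2j+1}.
\end{align*}
The inner sums are precisely the tails of the Taylor series for $\cos(2\pi f i t)$ and $\sin(2\pi f i t)$ past the order-$2d{+}1$ term. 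By the standard Taylor remainder bound, each tail is bounded in absolute value by $\frac{(2\pi f i t)^{2d+2}}{(2d+2)!}$ and $\frac{(2\pi f i t)^{2d+3}}{(2d+3)!}$ respectively. Since $t \in [0,T]$ and $i \leq d+1$, each tail is uniformly at most $\frac{(2\pi f(d+1)T)^{2d+2}}{(2d+2)!}$ (up to an extra factor of $2\pi f(d+1)T/(2d+3)$ for the sin tail), which scales like $f^{2d+2}$ as $f \to 0$.

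Next I need an upper bound on $|\alpha_i|$ and $|\beta_i|$. From the proof of Claim~\ref{cla:P_1_is_Q}, the values $\gamma_{2j}, \gamma_{2j+1}$ for $0 \le j \le d$ are prescribed by $\gamma_{2j} = \frac{(-1)^j (2j)!}{(2\pi f)^{2j}} c_{2j}$ and $\gamma_{2j+1} = \frac{(-1)^j (2j+1)!}{(2\pi f)^{2j+1}} c_{2j+1}$, so their magnitudes grow at most like $f^{-2d-1}$. Since the Vandermonde matrices $A$ and $B$ depend only on $d$ (not on $f$) and are invertible by Lemma~\ref{lem:vandermonde}, solving $A\alpha = \gamma_{\mathrm{even}}$ and $B\beta = \gamma_{\mathrm{odd}}$ yields the coordinate-wise bound $|\alpha_i|, |\beta_i| \leq C(d, \{c_j\}) \cdot f^{-(2d+1)}$ where $C$ depends only on $d$, $\|A^{-1}\|_\infty$, $\|B^{-1}\|_\infty$, and $\max_j |c_j|$.

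Finally I would combine the two bounds: the contribution of each $i$ to $|P_2(t)|$ is at most $(|\alpha_i|+|\beta_i|)\cdot O(f^{2d+2} T^{2d+2})$, so summing over $i \in [d+1]$ gives $|P_2(t)| \leq C'(d,\{c_j\}) \cdot f \cdot T^{2d+3}$ for a constant $C'$ that is independent of $f$. Choosing $f \leq \epsilon / (C'(d,\{c_j\}) T^{2d+3})$ then makes $|P_2(t)| \leq \epsilon$ uniformly on $[0,T]$, completing the proof. The main obstacle I anticipate is tracking the $f$-dependence carefully, since both $|\alpha_i|,|\beta_i|$ blow up polynomially in $1/f$ while the Taylor remainder decays like $f^{2d+2}$; the crux is that the remainder's faster decay always wins by a factor of $f$, which is exactly why a single free parameter $f$ suffices to drive $P_2$ below any target $\epsilon$ without perturbing the matching condition $P_1 = Q$.
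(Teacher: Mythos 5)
Your proposal is correct and follows essentially the same route as the paper's proof: both recognize $P_2$ as the $\alpha$- and $\beta$-weighted Taylor tails of $\cos(2\pi f i t)$ and $\sin(2\pi f i t)$, bound $|\alpha_i|,|\beta_i|$ by inverting the Vandermonde systems (the paper via $\|A^\dagger\|_2 = 1/\sigma_{\min}(A)$ with explicit $2^{O(d^3\log d)}$ estimates, you via $\|A^{-1}\|_\infty$), and exploit the fact that the tails' decay in $f$ beats the coefficients' polynomial blow-up in $1/f$, so a sufficiently small $f$ drives $|P_2|$ below $\epsilon$. Your explicit bookkeeping that the net dependence is a positive power of $f$ is, if anything, a slightly cleaner presentation of the same argument.
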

\begin{proof}
Using Lemma~\ref{lem:vandermonde}, we can upper bound the determinant of matrix $A$ and $B$ in the following sense,
\begin{align*}
\det(A) \leq 2^{O(d^2 \log d)} \text{~and~} \det(B) \leq 2^{O(d^2 \log d)}.
\end{align*}
 Thus $\sigma_{\max}(A) \leq 2^{O(d^2\log d)}$ and then
\begin{align*}
\sigma_{\min}(A) = \frac{\det(A)}{\prod_{i=1}^{d-1} \sigma_i} \geq 2^{-O(d^3\log d)}.
\end{align*}
Similarly, we have $\sigma_{\max}\leq 2^{O(d^2\log d)}$ and $\sigma_{\min} \geq 2^{-O(d^3\log d)}$. Next we show how to upper bound $|\alpha_i|$
\begin{align*}
\max_{i\in [d+1]} |\alpha_i | \leq & ~ \| \alpha \|_2 \\ 
= & ~ \| A^\dagger c_{\text{even}} \|_2 \\
\leq & ~ \| A^\dagger \|_2 \cdot \| c_{\text{even}} \|_2 \\
\leq & ~ \frac{1}{\sigma_{\min}(A)} \sqrt{d+1} \max_{0\leq j \leq d} \frac{ |c_{2j}| (2j)! }{ (2\pi f)^{2j}}
\end{align*}
Similarly, we can bound $|\beta_i|$,
\begin{align*}
\max_{i\in [d+1]} |\beta_i | \leq & ~ \| \beta \|_2 \\ 
= & ~ \| B^\dagger c_{\text{odd}} \|_2 \\
\leq & ~ \| B^\dagger \|_2 \cdot \| c_{\text{odd}} \|_2 \\
\leq & ~ \frac{1}{\sigma_{\min}(B)} \sqrt{d+1} \max_{0\leq j \leq d} \frac{ |c_{2j+1} | (2j+1)! }{ (2\pi f)^{2j+1}}
\end{align*}
Let $P_{\alpha}(t)$ and $P_{\beta}(t)$ be defined as follows
\begin{align*}
P_{\alpha}(t) = & ~ \sum_{j=d+1}^{\infty} \frac{ (-1)^j }{ (2j)! } (2\pi f t)^{2j} \sum_{i=1}^{d+1} \alpha_i \cdot i^{2j} \\
P_{\beta}(t)  = & ~ \sum_{j=d+1}^{\infty} \frac{ (-1)^j }{ (2j+1)! } (2\pi f t)^{2j+1} \sum_{i=1}^{d+1} \alpha_i \cdot i^{2j+1}
\end{align*}
By triangle inequality, we have
\begin{align*}
|P_2(t)| \leq | P_{\alpha}(t) | + | P_{\beta}(t) |
\end{align*}
We will bound the above two terms in the following way
\begin{align*}
| P_{\alpha}(t) | \leq & ~ \sum_{j=d+1}^{\infty} \frac{ (2\pi f t)^{2j} }{ (2j) ! } \sum_{i=1}^{d+1} | \alpha_i |\cdot i^{2j}  \\
\leq & ~ \sum_{j=d+1}^{\infty} \frac{ (2\pi f t)^{2j} }{ (2j) ! } (d+1)^{d+1} \max_{i \in [d+1]} |\alpha_i| \\
\leq & ~ \sum_{j=d+1}^{\infty} \frac{ (2\pi f t)^{2j} }{ (2j) ! } (d+1)^{d+2}\\
& ~ \cdot \frac{1}{\sigma_{\min}(A)} \frac{(2d)!}{ (2\pi f)^{2d}} \max_{ 0\leq j \leq d } |c_{2j}|\\
\leq & ~ \epsilon,
\end{align*}
where the last step follows by choosing sufficiently small $f$, $f \lesssim \epsilon / ( 2^{\Theta(d^3 \log d)} \max_{0\leq j \leq d} |c_{2j}| )$. Similarly, if $f \lesssim \epsilon / ( 2^{\Theta(d^3 \log d)} \max_{0\leq j \leq d} |c_{2j+1}| )$, we have $|P_{\beta}(t)| \leq \epsilon$.

Putting it all together, we complete the proof of this Claim.
\end{proof}


\begin{claim}\label{cla:fourier_rewrite_x*}
Let $P_1(t)$ and $P_2(t)$ be defined as follows
\begin{align*}
P_1(t) = & ~ \sum_{j=0}^{d} \frac{ (-1)^j }{ (2j) ! } (2 \pi f t)^{2j} \gamma_{2j} + \sum_{j=0}^d \frac{ (-1)^j }{ (2j+1) ! } (2 \pi f t )^{2j+1} \gamma_{2j+1} \\
P_2(t) = & ~ \sum_{j=d+1}^{\infty} \frac{ (-1)^j }{ (2j) ! } (2 \pi f t)^{2j} \gamma_{2j} + \sum_{j=d+1}^{\infty} \frac{ (-1)^j }{ (2j+1) ! } (2 \pi f t )^{2j+1} \gamma_{2j+1}
\end{align*}
Then
\begin{align*}
x^*(t) = Q(t) + ( P_1(t) - Q(t) ) + P_2(t).
\end{align*}
\end{claim}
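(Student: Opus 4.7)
The plan is to verify the identity by direct Taylor expansion. The assertion $x^*(t) = Q(t) + (P_1(t) - Q(t)) + P_2(t)$ is, after trivial cancellation, simply $x^*(t) = P_1(t) + P_2(t)$. So the task reduces to showing that $P_1 + P_2$ is precisely the Taylor expansion of $x^*(t)$, partitioned at degree $2d+1$.

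First I would substitute the Taylor series for the trigonometric components of $x^*(t)$. Reading the definitions of $\gamma_{2j}$ and $\gamma_{2j+1}$, the $\alpha_i$-terms contribute only the even powers of $t$ (coming from the Taylor series of $\cos(2\pi f i t)$) and the $\beta_i$-terms contribute only the odd powers of $t$ (coming from the sine-like component of $\cos(2\pi fit + \theta_i)$, with the phase chosen so that the cosine component vanishes). Using
\begin{align*}
\cos(2\pi f i t) = \sum_{j=0}^\infty \frac{(-1)^j}{(2j)!} (2\pi f i t)^{2j}, \qquad
\sin(2\pi f i t) = \sum_{j=0}^\infty \frac{(-1)^j}{(2j+1)!} (2\pi f i t)^{2j+1},
\end{align*}
I would pull the factor $i^{2j}$ (resp.\ $i^{2j+1}$) out and swap the order of summation, so that the sum over $i \in [d+1]$ of $\alpha_i i^{2j}$ collapses to $\gamma_{2j}$ and the sum of $\beta_i i^{2j+1}$ collapses to $\gamma_{2j+1}$. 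This yields
\begin{align*}
x^*(t) = \sum_{j=0}^\infty \frac{(-1)^j}{(2j)!} (2\pi f t)^{2j} \gamma_{2j} + \sum_{j=0}^\infty \frac{(-1)^j}{(2j+1)!} (2\pi f t)^{2j+1} \gamma_{2j+1}.
\end{align*}

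Next I would split each infinite series at index $j = d$: the terms with $0 \leq j \leq d$ form exactly $P_1(t)$, and the tail with $j \geq d+1$ forms exactly $P_2(t)$. Hence $x^*(t) = P_1(t) + P_2(t)$, and writing $P_1(t) = Q(t) + (P_1(t) - Q(t))$ gives the stated decomposition. The interchange of the finite sum over $i$ with the infinite Taylor sum is justified by absolute convergence of the Taylor series of $\cos$ and $\sin$ on all of $\mathbb{R}$, uniformly on the compact interval $[0, T]$.

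The main (and only) obstacle is a bookkeeping one: one must be careful that the $\beta_i$ contributions generate only odd powers of $t$, which is consistent with the definition of $\gamma_{2j+1}$ and with the later use of $P_1(t) = Q(t)$ in Claim~\ref{cla:P_1_is_Q}. Beyond that, the claim is a purely formal rearrangement of absolutely convergent series and poses no genuine analytic difficulty.
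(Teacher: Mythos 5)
Your argument is correct and is essentially identical to the paper's own proof: both expand $\cos(2\pi f i t)$ and $\sin(2\pi f i t)$ into their Taylor series, swap the sum over $i\in[d+1]$ with the Taylor sum to collapse the inner sums into $\gamma_{2j}$ and $\gamma_{2j+1}$, and then split the resulting series at $j=d$ to recover $P_1$ and $P_2$. Your additional remarks on absolute convergence and on the $\beta_i$ terms contributing only odd powers are consistent with (and slightly more careful than) the paper's presentation.
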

\begin{proof}
\begin{align*}
 & ~ x^*(t) \\
= & ~ \sum_{i=1}^{d+1} \alpha_i \cos ( 2 \pi f i t ) + \beta_i \sin ( 2\pi f i t ) \\
= & ~ \sum_{i=1}^{d+1} \alpha_i \sum_{j=0}^{\infty} \frac{(-1)^j}{ (2j) ! } (2\pi f i t)^{2j} 
 + \sum_{i=1}^{d+1} \beta_i \sum_{j=0}^{\infty} \frac{ (-1)^j }{ (2j+1) ! } (2\pi f i t)^{2j+1} \\
= & ~ \sum_{j=0}^{\infty} \frac{ (-1)^j }{ (2j) ! } (2\pi f t)^{2j}  \sum_{i=1}^{d+1} i^{2j} \alpha_i 
 + \sum_{j=0}^{\infty} \frac{ (-1)^j }{ (2j+1) ! } (2\pi f t)^{2j+1} \sum_{i=1}^{d+1} i^{2j+1}\beta_i \\
= & ~ \sum_{j=0}^{\infty} \frac{ (-1)^j }{ (2j) ! } (2\pi f t)^{2j}  \gamma_{2j} 
+ \sum_{j=0}^{\infty} \frac{ (-1)^j }{ (2j+1) ! } (2\pi f t)^{2j+1} \gamma_{2j+1} \\ 
= & ~ Q(t) + (P_1(t) - Q(t)) + P_2(t).
\end{align*}
where the first step follows by definition of $x^*(t)$, the second step follows by Taylor expansion, the third step follows by swapping two sums, the fourth step follows by definition of $\gamma_j$, the last step follows by definition of $P_1(t)$ and $P_2(t)$.
\end{proof}


\restate{thm:exponential_decay_no_expressive_power}

\begin{proof}
    Let $f(t) = t - \frac{t^3}{3!} + \frac{t^5}{5!} - \frac{t^7}{7!} + \frac{t^9}{9!}$. 
    We choose $P(t) = f(\beta \frac{t}{T} - \frac{\beta}{2})$, as shown in Fig.~\ref{fig:P(t)}, 
    where $f(-\frac{\beta}{2}) = 0$ and $f(\frac{\beta}{2}) = 0$. 
    $P(t) = 0$ has 5 real solutions $t = 0, \beta_1 T, \frac{1}{2} T, \beta_2 T, T$ between $[0, T]$. 
    Numerically, $\beta = 9.9263 \pm 10^{-4}, \beta_1 = 0.1828 \pm 10^{-4}, \beta_2 = 0.8172 \pm 10^{-4}$. 
    We have 4 partitions for $P(t)$,  
    \begin{itemize}
        \item[$A_1:$] $ P(t) > 0, \forall t \in (0, \beta_1 T)$ and $M_1 = \int_{0}^{\beta_1 T} P(t) \mathrm{d} t = (0.0847 \pm 10^{-4}) T$;  
        \item[$A_2:$] $ P(t) < 0, \forall t \in (t_1, \frac{1}{2} T)$ and $M_2 = \int_{\beta_1 T}^{\frac{1}{2} T} P(t) \mathrm{d} t = (-0.2017 \pm 10^{-4}) T$;  
        \item[$A_3:$] $ P(t) > 0, \forall t \in (\frac{1}{2} T, t_2)$ and $M_3 = \int_{\frac{1}{2} T}^{\beta_2 T} P(t) \mathrm{d} t = (0.2017 \pm 10^{-4}) T$;  
        \item[$A_4:$] $ P(t) < 0, \forall t \in (t_2, T)$ and $M_4 = \int_{\beta_2 T}^{T} P(t) \mathrm{d} t = (-0.0847 \pm 10^{-4}) T$.  
    \end{itemize}
    The integral of $|P(t)|$ across $(0, T)$ is, 
    \begin{align*}
        \int_{0}^{T} |P(t)| \mathrm{d} t = |M_1| + |M_2| + |M_3| + |M_4|  = (0.5727 \pm 10^{-4}) T. 
    \end{align*}
    According to the properties of sums of exponential functions in \cite{shestopaloff2010sums, shestopaloff2008properties}, $x(t) = 0$ has at most two solutions. 
    We consider the integral of fitting error, 
    \begin{align*}
        A = & ~ \int_0^T \left|x(t) - P(t)\right| \mathrm{d} t \\
        = & ~ \underbrace{\int_0^{\beta_1 T} |x(t) - P(t)|\mathrm{d}t}_{A_1}
        + \underbrace{\int_{\beta_1 T}^{\frac{T}{2}} |x(t) - P(t)|\mathrm{d}t}_{A_2} \\
        & ~ + \underbrace{\int_{\frac{T}{2}}^{\beta_2 T} |x(t) - P(t)|\mathrm{d}t}_{A_3}
        + \underbrace{\int_{\beta_2 T}^T |x(t) - P(t)|\mathrm{d}t}_{A_4}. 
    \end{align*}

    {\bf Case 1. } $x(t) = 0$ has zero solution and $x(t) > 0$. 
    \begin{align*}
        \int_0^T |x(t) - P(t)| \mathrm{d} t \ge & ~ |M_2| + |M_4| \\
                                            \ge & ~ (0.2864 - 2 \times 10^{-4}) T. 
    \end{align*}

    {\bf Case 2. } $x(t) = 0$ has zero solution and $x(t) < 0$. 
    \begin{align*}
        \int_0^T |x(t) - P(t)| \mathrm{d} t \ge & ~ |M_1| + |M_3| \\
                                            \ge & ~ (0.2864 - 2 \times 10^{-4}) T. 
    \end{align*}

    {\bf Case 3. } $x(t) = 0$ has one solution $t_1$, $x(t) > 0, \forall t \in (0, t_1)$, and $x(t) < 0, \forall t \in (t_1, T)$. 
    Even if $x(t)$ can fit perfectly in $(0, t_1)$ with $A_1$ and in $(t_1, T)$ with $A_2, A_4$, it cannot fit $A_3$ well. 
    Similarly, even if $x(t)$ can fit perfectly in $(0, t_1)$ with $A_1, A_3$ and in $(t_1, T)$ with $A_4$, it cannot fit $A_2$ well. 
    \begin{align*}
        \int_0^T |x(t) - P(t)| \mathrm{d} t \ge & ~ \min(|M_2|, |M_3|) \\
                                            \ge & ~ (0.2017 - 10^{-4}) T. 
    \end{align*}

    {\bf Case 4. } $x(t) = 0$ has one solution $t_1$, $x(t) < 0, \forall t \in (0, t_1)$, and $x(t) > 0, \forall t \in (t_1, T)$. 
    Even if $x(t)$ can fit perfectly in $(0, t_1)$ with $A_2$ and in $(t_1, T)$ with $A_3$, it cannot fit $A_1, A_4$ well. 
    \begin{align*}
        \int_0^T |x(t) - P(t)| \mathrm{d} t \ge & ~ |M_1| + |M_4| \\
                                            \ge & ~ (0.1694 - 2 \times 10^{-4}) T. 
    \end{align*}

    {\bf Case 5. } $x(t) = 0$ has two solutions $t_1, t_2$, $x(t) > 0, \forall t \in (0, t_1)$, $x(t) < 0, \forall t \in (t_1, t_2)$, and $x(t) > 0, \forall t \in (t_2, T)$. 
    Even if $x(t)$ can fit perfectly with $A_1, A_2, A_3$, it cannot fit $A_4$ well. 
    \begin{align*}
        \int_0^T |x(t) - P(t)| \mathrm{d} t \ge & ~ |M_4| \\
                                            \ge & ~ (0.0847 - 10^{-4}) T. 
    \end{align*}

    {\bf Case 6. } $x(t) = 0$ has two solutions $t_1, t_2$, $x(t) < 0, \forall t \in (0, t_1)$, $x(t) > 0, \forall t \in (t_1, t_2)$, and $x(t) < 0, \forall t \in (t_2, T)$. 
    Even if $x(t)$ can fit perfectly with $A_2, A_3, A_4$, it cannot fit $A_1$ well. 
    \begin{align*}
        \int_0^T |x(t) - P(t)| \mathrm{d} t \ge & ~ |M_1| \\
                                            \ge & ~ (0.0847 - 10^{-4}) T. 
    \end{align*}

    Therefore, for all cases, $x(t)$ cannot achieve constant approximation to the signal $P(t)$, e.g., 
    \begin{align*}
        \int_0^T |x(t) - P(t)| \mathrm{d} t \gtrsim \int_0^T |P(t) |\mathrm{d} t.
    \end{align*}
\end{proof}

\begin{figure*}[!t]
    \centering
    \includegraphics[width=0.8\textwidth]{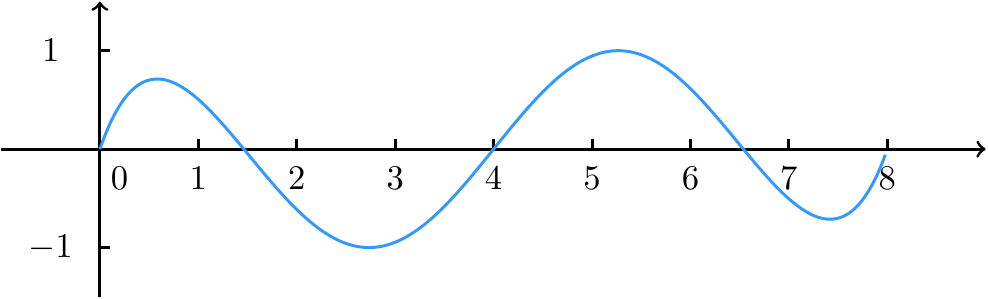}
    \caption{Polynomial function $P(t)$ with $T = 8$.}
    \label{fig:P(t)}
\end{figure*}

\section{Bounded Gradients}\label{app:vangrad}

\restate{cla:sru_ut_is_sum}

\begin{proof}
\begin{align}
 u^{(T)} = & ~ \alpha \cdot u^{(T-1)} + (1 - \alpha) \cdot h^{(T)} \notag \\
= & ~ \alpha \cdot u^{(T-1)} + (1-\alpha) \cdot ( W \cdot u^{(T-1)} + W_3 \cdot x^{(T-1)} + B ) \notag \\  
= & ~ (\alpha I + (1-\alpha) W) \cdot u^{(T-1)} + (1-\alpha) W_3 (  x^{(T-1)} + B) \notag \\
= & ~ (\alpha I + (1-\alpha) W)^2 \cdot u^{(T-2)}  + (\alpha I + (1-\alpha) W) (1-\alpha) W_3 (  x^{(T-2)} + B) \notag \\
& ~ + (1-\alpha) W_3 ( x^{(T-1)} + B) \notag \\
= & ~ (\alpha I + (1-\alpha) W)^{T-T_0} u^{(T_0)} + \sum_{t=T_0}^T (\alpha I + (1-\alpha) W)^{T-t-1} (1-\alpha) W_3 (  x^{(t)} + B) 
\end{align}
\end{proof}

\restate{cla:fru_ut_is_sum}

\begin{proof}
\begin{align}
& ~ u^{(T)} \notag \\
= & ~ u^{(T-1)} + \frac{1}{T} \cos(2\pi f T /T + \theta) h^{(T)} \notag \\
= & ~ u^{(T-1)} + \frac{1}{T} \cos(2\pi f T /T + \theta ) ( W u^{(T-1)} + W_3 x^{(T-1)} +B ) \notag \\
= & ~ (I+ \frac{1}{T} \cos(2\pi f T /T + \theta) W ) u^{(T-1)}  + \frac{1}{T} \cos(2\pi f T/T + \theta) (W_3 x^{(T-1)} + B) \notag \\
= & ~ (I+ \frac{1}{T} \cos(2\pi f T /T + \theta) W )  \cdot (u^{(T-2)} + \frac{1}{T} \cos (2\pi f (T-1)/ T + \theta) (W u^{(T-2)} + W_3 x^{(T-2)} + B) ) \notag \\
& ~ + \frac{1}{T} \cos(2\pi f T/T + \theta) (W_3 x^{(T-1)} + B) \notag \\
= & ~ (I + \frac{1}{T} \cos(2\pi f T/T + \theta) W) (I + \frac{1}{T} \cos(2\pi f (T-1)/T + \theta)) u^{(T-2)} \notag \\
& ~ + \frac{1}{T} \cos(2\pi f(T-1)/T + \theta) (W_3 x^{(T-2)} +B) \notag \\
& ~ + \frac{1}{T} \cos(2\pi f T/T + \theta) (W_3 x^{(T-1)} +B) \notag \\
= & ~\prod_{t=T_0+1}^T ( I + \frac{1}{T} \cos(2\pi f t/T + \theta) W) u^{(T_0)}  + \sum_{t=T_0+1}^T \frac{1}{T} \cos(2\pi f t / T + \theta) (W_3 x^{(t-1)} + B ).
\end{align}
\end{proof}

\restate{cla:fru_upper_bound_gradient}

\begin{proof}
Recall that Chain rule gives
\allowdisplaybreaks
\begin{align*}
\left\| \frac{\partial L}{ \partial u^{(T_0)} } \right\|= & ~ \left\| \left(\prod_{t=T_0+1}^T (I + \frac{1}{T} \cos ( 2\pi f t /T + \theta ) \cdot W ) \right)^\top \frac{\partial L}{\partial u^{(T)}} \right\| \\
\leq & ~ \sigma_{\max} \left( \prod_{t=T_0+1}^T (I + \frac{1}{T} \cos ( 2\pi f t /T + \theta ) \cdot W ) \right) \cdot \left\| \frac{\partial L}{\partial u^{(T)}} \right\| \\
= & ~\left( \prod_{t\in S_+} (1 + \frac{1}{T} \cos ( 2\pi f t /T + \theta ) \cdot  \sigma_{\max} (W) ) \right)  \\
 \cdot & ~  \left( \prod_{t \in S_-} (1 + \frac{1}{T} \cos ( 2\pi f t /T + \theta ) \cdot \sigma_{\min}( W) ) \right) \cdot \left\| \frac{\partial L}{\partial u^{(T)}} \right\|.
\end{align*}
The product term related to $S_-$ can be bounded by $1$ easily since $T > \sigma_{\max}(W) > \sigma_{\min}(W)$. Now the question is how to bound the product term related to $S_+$. Using the fact that $1+x \leq e^{x}$, then we have
\begin{align*}
 & ~ \prod_{t\in S_+} \left( 1 + \frac{1}{T} \cos ( 2\pi f t /T + \theta) \cdot  \sigma_{\max} (W) \right) \\
\leq & ~ \prod_{t \in S_+} \exp \left( \frac{1}{T} \cos ( 2\pi f t /T + \theta ) \cdot  \sigma_{\max} (W) \right) \\
= & ~ \exp \left( \sum_{t \in S_+} \frac{1}{T} \cos ( 2\pi f t /T + \theta) \cdot  \sigma_{\max} (W) \right) \\
\leq & ~ \exp{(\sigma_{\max}(W))}.
\end{align*}
Thus, putting it all together, we have
\begin{align*}
\left\| \frac{\partial L}{ \partial u^{(T_0)} } \right\| \leq e^{ \sigma_{\max} (W) } \left\| \frac{\partial L}{\partial u^{(T)}} \right\|.
\end{align*}
\end{proof}

In the proof of Claim~\ref{cla:fru_upper_bound_gradient} and Theorem~\ref{thm:fru_gradient}, the main property we used from Fourier basis is $|\cos(2\pi f t /T + \theta)| \leq 1, \forall t \in [0,T]$. Thus, it is interesting to understand how general is our proof technique. We can obtain the following result,
\begin{corollary}\label{thm:fru_gradient_general}
Let ${\cal F}$ denote a set. For each $f \in {\cal F}$, we define a function $y_f : \R \rightarrow \R$. For each $f \in {\cal F}$, $|y_f(t)|\leq 1, \forall t \in [0,T]$. We use $y_f(t)$ to replace $\cos(2\pi f t/T + \theta)$ in FRU update rule, i.e.,
\begin{align*}
h^{(t)} = & ~ W_1 W_2 \cdot u^{(t-1)} + W_2 b_1 + W_3 \cdot x^{(t-1)} + b_2, \\
u^{(t)} = & ~ u^{(t-1)} + \frac{1}{T} \cos (2\pi f t /T + \theta) \cdot h^{(t)}, & \mathrm{FRU} \\
u^{(t)} = & ~ u^{(t-1)} + \frac{1}{T} y_f(t) \cdot h^{(t)}. & \mathrm{Generalization~of~FRU}
\end{align*}
 Then
\begin{align*}
e^{- 2\sigma_{\max}(W_1W_2)} \left\| \frac{\partial L}{\partial u^{(T)}} \right\|
\leq \left\| \frac{\partial L}{ \partial u^{(T_0)} } \right\| \leq e^{ \sigma_{\max} (W_1W_2) } \left\| \frac{\partial L}{\partial u^{(T)}} \right\|.
\end{align*}
\end{corollary}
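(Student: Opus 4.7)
The plan is to observe that Theorem~\ref{thm:fru_gradient} never uses anything special about $\cos(2\pi ft/T+\theta)$ beyond the pointwise bound $|\cos(2\pi ft/T+\theta)|\leq 1$, so the proof transfers almost verbatim once we swap symbols. Concretely, I would replay the three main steps of the earlier argument with $y_f(t)$ in place of the cosine, carrying the bound $|y_f(t)|\leq 1$ through.

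First I would unroll the generalized recurrence exactly as in Claim~\ref{cla:fru_ut_is_sum}: setting $W=W_1W_2$ and $B=W_2 b_1+b_2$, induction on $T-T_0$ gives
\[
u^{(T)} = \prod_{t=T_0+1}^{T}\Bigl(I+\tfrac{1}{T} y_f(t)\,W\Bigr)\,u^{(T_0)} + \tfrac{1}{T}\sum_{t=T_0+1}^{T} y_f(t)\bigl(W_3 x^{(t-1)}+B\bigr),
\]
where the induction step is identical to the cosine case because it uses only the linearity of the update and not the identity of the coefficient. Next, by the chain rule,
\[
\frac{\partial L}{\partial u^{(T_0)}} = \Bigl(\prod_{t=T_0+1}^{T}\bigl(I+\tfrac{1}{T} y_f(t)\,W\bigr)\Bigr)^{\!\top} \frac{\partial L}{\partial u^{(T)}},
\]
so the task reduces to sandwiching the product's extreme singular values.

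For the bounds I would partition the time indices into $S_+=\{t:y_f(t)\geq 0\}$ and $S_-=\{t:y_f(t)<0\}$, just as in the original proof, and then bound $\sigma_{\min}$ and $\sigma_{\max}$ of each factor separately. For the lower bound, on $S_+$ each factor has $\sigma_{\min}\geq 1+\tfrac{1}{T}y_f(t)\sigma_{\min}(W)\geq 1$, and on $S_-$ each factor has $\sigma_{\min}\geq 1-\tfrac{1}{T}|y_f(t)|\sigma_{\max}(W)$, which is $\geq \exp(-\tfrac{2}{T}|y_f(t)|\sigma_{\max}(W))$ using $1-x\geq e^{-2x}$ for $x\in[0,1]$; summing the exponents over $S_-$ and invoking $|y_f(t)|\leq 1$ and $|S_-|\leq T$ yields the factor $e^{-2\sigma_{\max}(W)}$. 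For the upper bound, symmetrically, the $S_-$ factors are at most $1$ and the $S_+$ factors satisfy $1+\tfrac{1}{T}y_f(t)\sigma_{\max}(W)\leq \exp(\tfrac{1}{T}y_f(t)\sigma_{\max}(W))$, and summing yields $e^{\sigma_{\max}(W)}$.

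No step is really hard; the only thing to watch is that the argument to $1-x\geq e^{-2x}$ lies in $[0,1]$, which needs $\sigma_{\max}(W)\leq T$ (the same mild assumption implicit in Theorem~\ref{thm:fru_gradient}), and that submultiplicativity of $\sigma_{\min}$ over products is used only through the product of scalar factors $1+\tfrac{1}{T}y_f(t)\lambda$ along the common orthogonal basis implied by the $I$ term. Assembling the two sides gives
\[
e^{-2\sigma_{\max}(W_1 W_2)}\Bigl\|\tfrac{\partial L}{\partial u^{(T)}}\Bigr\| \leq \Bigl\|\tfrac{\partial L}{\partial u^{(T_0)}}\Bigr\| \leq e^{\sigma_{\max}(W_1 W_2)}\Bigl\|\tfrac{\partial L}{\partial u^{(T)}}\Bigr\|,
\]
which is exactly the claimed corollary.
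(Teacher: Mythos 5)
Your proposal is correct and mirrors the paper's own argument: the paper likewise observes that the only property of the cosine used in Theorem~\ref{thm:fru_gradient} is the pointwise bound $|\cdot|\leq 1$, unrolls the recurrence, applies the chain rule, splits indices into $S_+$ and $S_-$, and uses $1+x\leq e^x$ and $1-x\geq e^{-2x}$ to obtain the factors $e^{\sigma_{\max}(W)}$ and $e^{-2\sigma_{\max}(W)}$. No substantive differences to note.
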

\begin{proof}
Let $W = W_1 W_1$. Recall that Chain rule gives
\begin{align*}
\left\| \frac{\partial L}{ \partial u^{(T_0)} } \right\|= & ~ \left\| \left(\prod_{t=T_0+1}^T (I + \frac{1}{T} \cdot y_f(t) \cdot W ) \right)^\top \frac{\partial L}{\partial u^{(T)}} \right\| \\
\leq & ~ \sigma_{\max} \left( \prod_{t=T_0+1}^T (I + \frac{1}{T} \cdot y_f(t) \cdot W ) \right) \cdot \left\| \frac{\partial L}{\partial u^{(T)}} \right\|,
\end{align*}
and
\begin{align*}
\left\| \frac{\partial L}{ \partial u^{(T_0)} } \right\|= & ~ \left\| \left(\prod_{t=T_0+1}^T (I + \frac{1}{T} \cdot y_f(t) \cdot W ) \right)^\top \frac{\partial L}{\partial u^{(T)}} \right\| \\
\geq & ~ \sigma_{\min} \left( \prod_{t=T_0+1}^T (I + \frac{1}{T} \cdot y_f(t) \cdot W ) \right) \cdot \left\| \frac{\partial L}{\partial u^{(T)}} \right\|.
\end{align*}
We define two sets $S_-$ and $S_+$ as follows 
\begin{align*}
S_- = \{ t ~|~ y_f(t) < 0, t = T_0+1, T_0+2, \cdots, T \}, \\
S_+ = \{ t ~|~ y_f(t) \geq 0, t = T_0+1, T_0+2, \cdots, T \}.
\end{align*}
Further, we 
\begin{align*}
& ~ \sigma_{\max} \left( \prod_{t=T_0+1}^T (I + \frac{1}{T} \cdot y_f(t) \cdot W ) \right) \\
\leq & ~ \left( \prod_{t\in S_+} ( 1 + \frac{1}{T} y_f(t) ) \cdot \sigma_{\max}(W) \right) \cdot \left( \prod_{t\in S_-} ( 1 + \frac{1}{T} y_f(t) ) \cdot \sigma_{\min}(W) \right) \\
\leq & ~ \left( \prod_{t\in S_+} ( 1 + \frac{1}{T} y_f(t) ) \cdot \sigma_{\max}(W) \right) \\
\leq & \exp( \sigma_{\max}(W)).
\end{align*}
Similarly as the proof of Theorem~\ref{thm:fru_gradient}, we have
\begin{align*}
\sigma_{\min} \left( \prod_{t=T_0+1}^T (I + \frac{1}{T} \cdot y_f(t) \cdot W ) \right) \geq \exp(-2\sigma_{\max}(W)).
\end{align*}
Therefore, we complete the proof.
\end{proof}

\begin{remark}
  Interestingly, our proof technique for Theorem~\ref{thm:fru_gradient} not only works for Fourier basis, but also works for any bounded basis. For example, the same statement(see Corollary~\ref{thm:fru_gradient_general} in Appendix) holds if we replace $\cos(2\pi f t/T + \theta)$ by $x_f(t)$ which satisfies $|x_f(t)| \leq O(1), \forall t \in [0,T]$.
\end{remark}

\section{More Experimental Results}
\label{sec:More_Experimental_Results}
We provide more experimental results on the MNIST dataset and detailed algorithms to generate 
synthetic data in this section.

\begin{figure*}[!h]
    \centering
    {\includegraphics[width=0.98\textwidth]{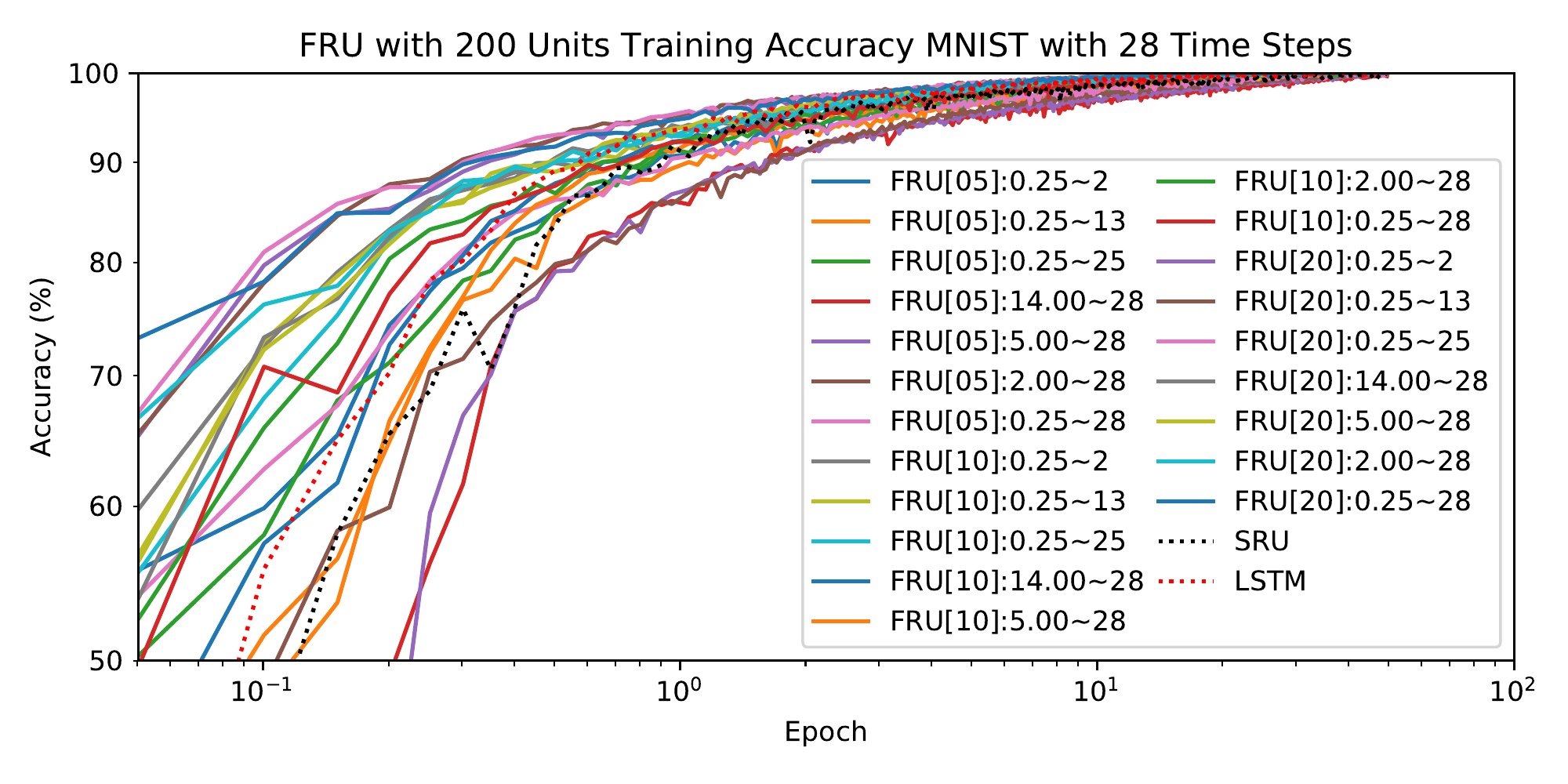}}\\ 
    {\includegraphics[width=0.98\textwidth]{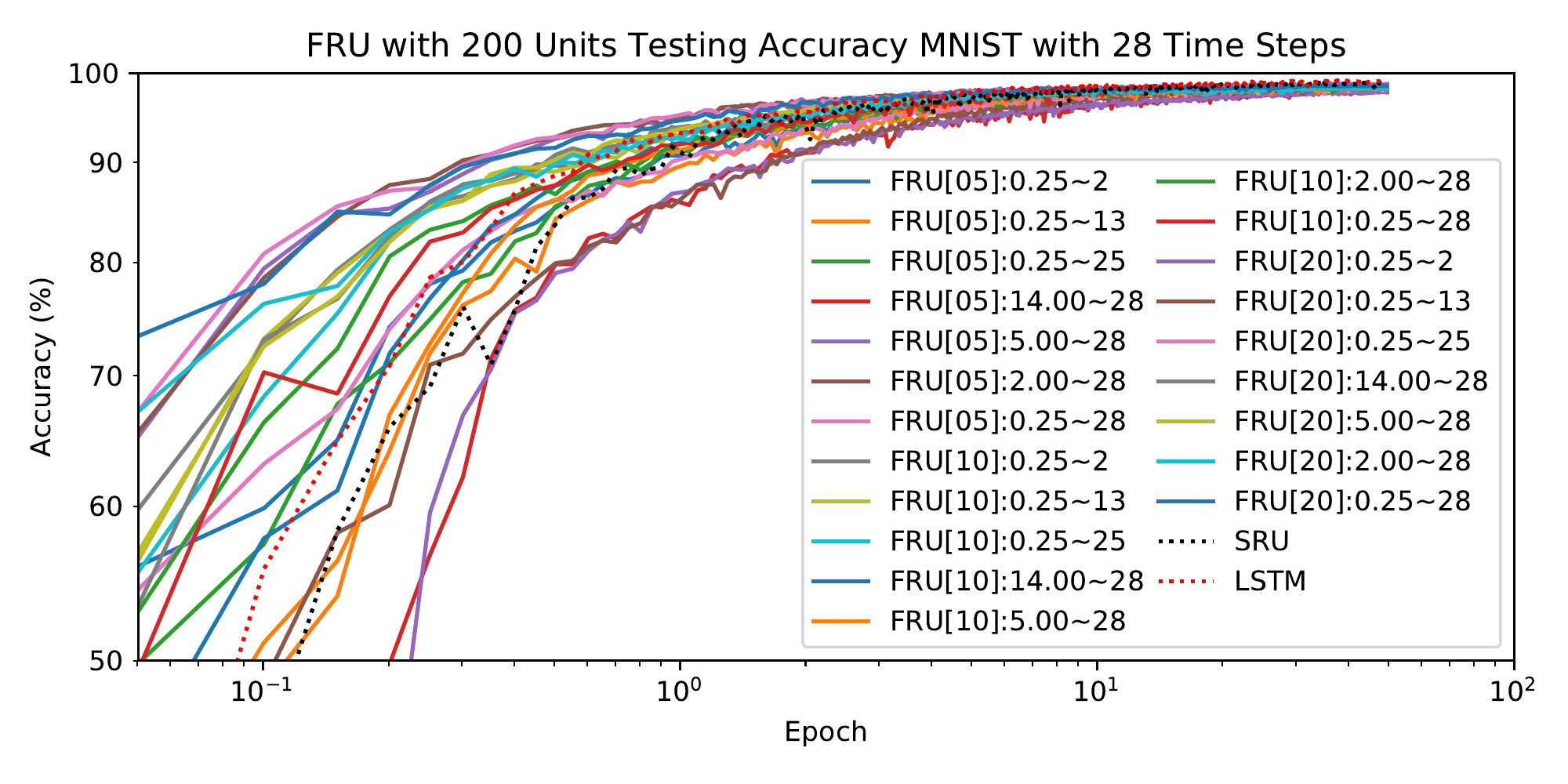}}
    \caption{Training and testing accuracy of different frequency combinations for FRU with 200 units on MNIST. Legend entry denotes id[\#freqs]:min freq$\sim$max freq. Frequencies are uniformly sampled in log space.}
    \label{fig:mnist_28_FRU_freqs}
\end{figure*}

\begin{figure*}[!h]
    \centering
    {\includegraphics[width=0.98\textwidth]{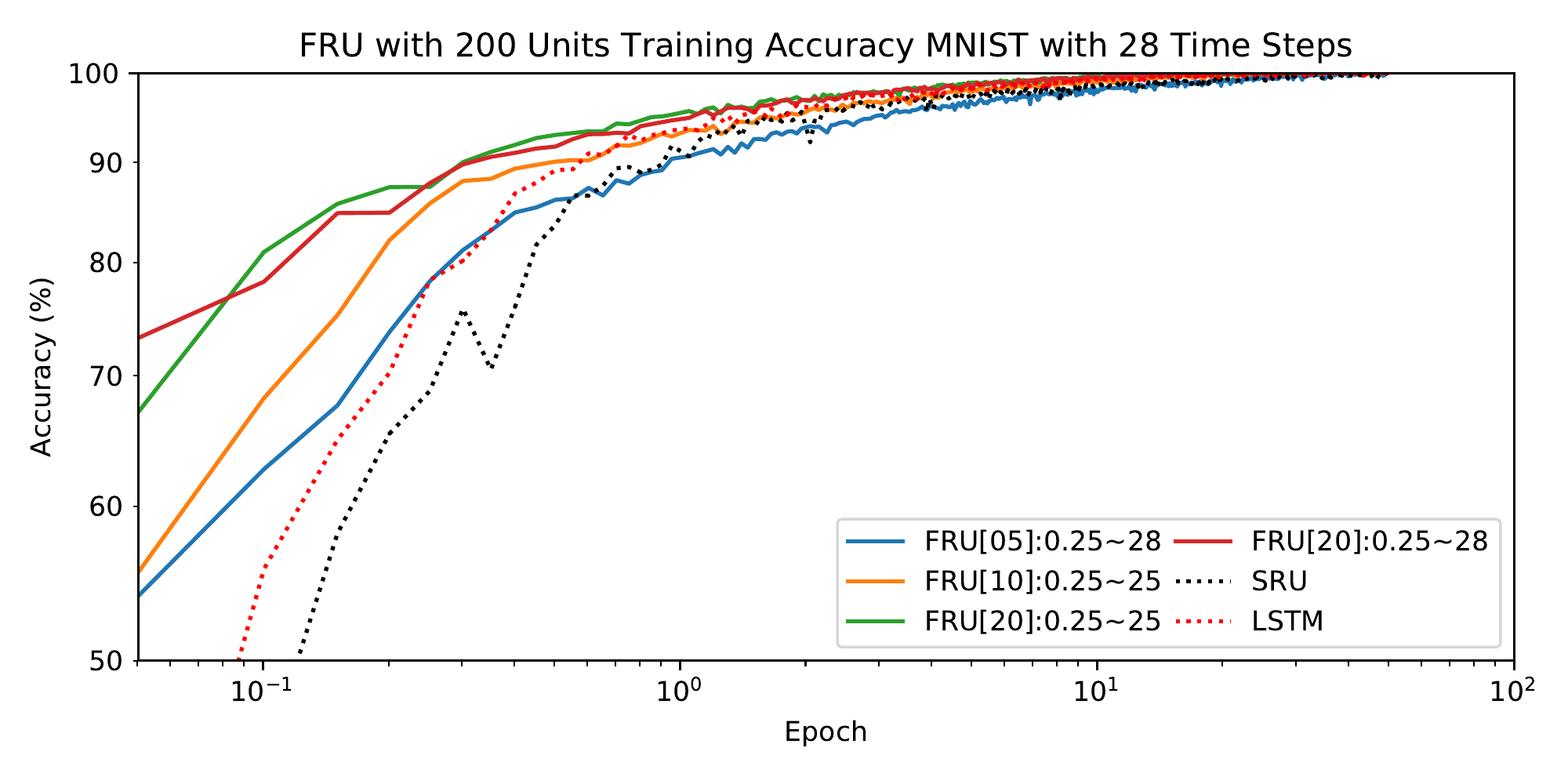}}\\
    {\includegraphics[width=0.98\textwidth]{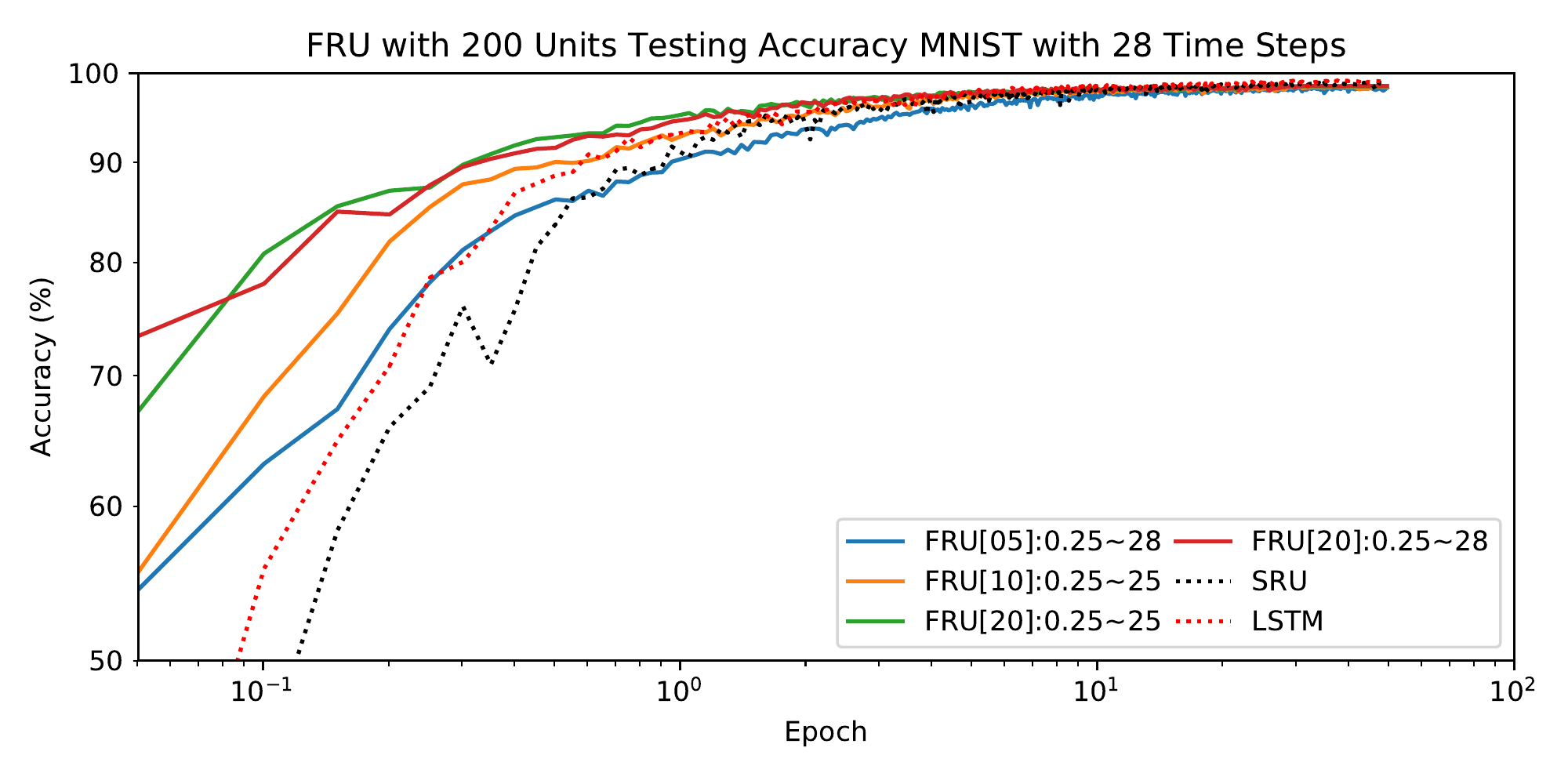}}
    \caption{Training and testing accuracy of different frequency combinations for FRU with 200 units on MNIST. Legend entry for FRU denotes FRU[\#freqs]:min freq$\sim$max freq. Frequencies are uniformly sampled in log space.}
    \label{fig:mnist_28_FRU_freqs_clean}
\end{figure*}

Fig.~\ref{fig:mnist_28_FRU_freqs} plots the training and testing accuracy for different frequencies of FRU on MNIST dataset with 28 time steps, respectively. 
In other words, the model reads one row of pixel at each time step. 
We stop at 50 epochs since most experiments has already converged. 
Given more frequencies, FRU in general achieves higher accuracy and faster convergence, 
while there are still discrepencies in accuracy for the different ranges of frequencies . 
Fig.~\ref{fig:mnist_28_FRU_freqs_clean} extracts the best ranges of frequencies given 5, 10, or 20 frequencies. 
We can see that frequency ranges that conver both low and high frequency components provide good performance. 

\begin{figure*}[!h]
    \centering
    {\includegraphics[width=0.98\textwidth]{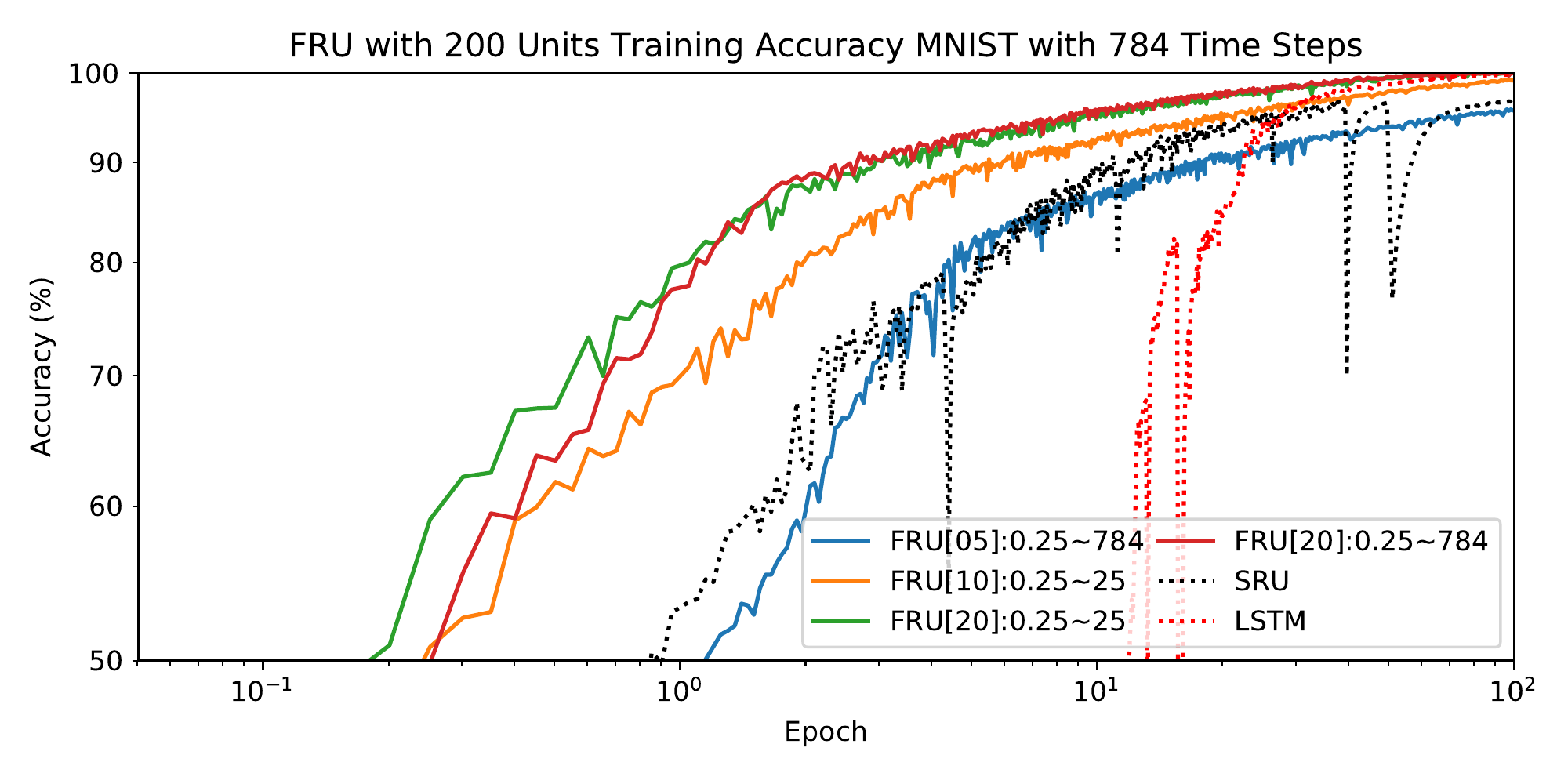}}\\
    {\includegraphics[width=0.98\textwidth]{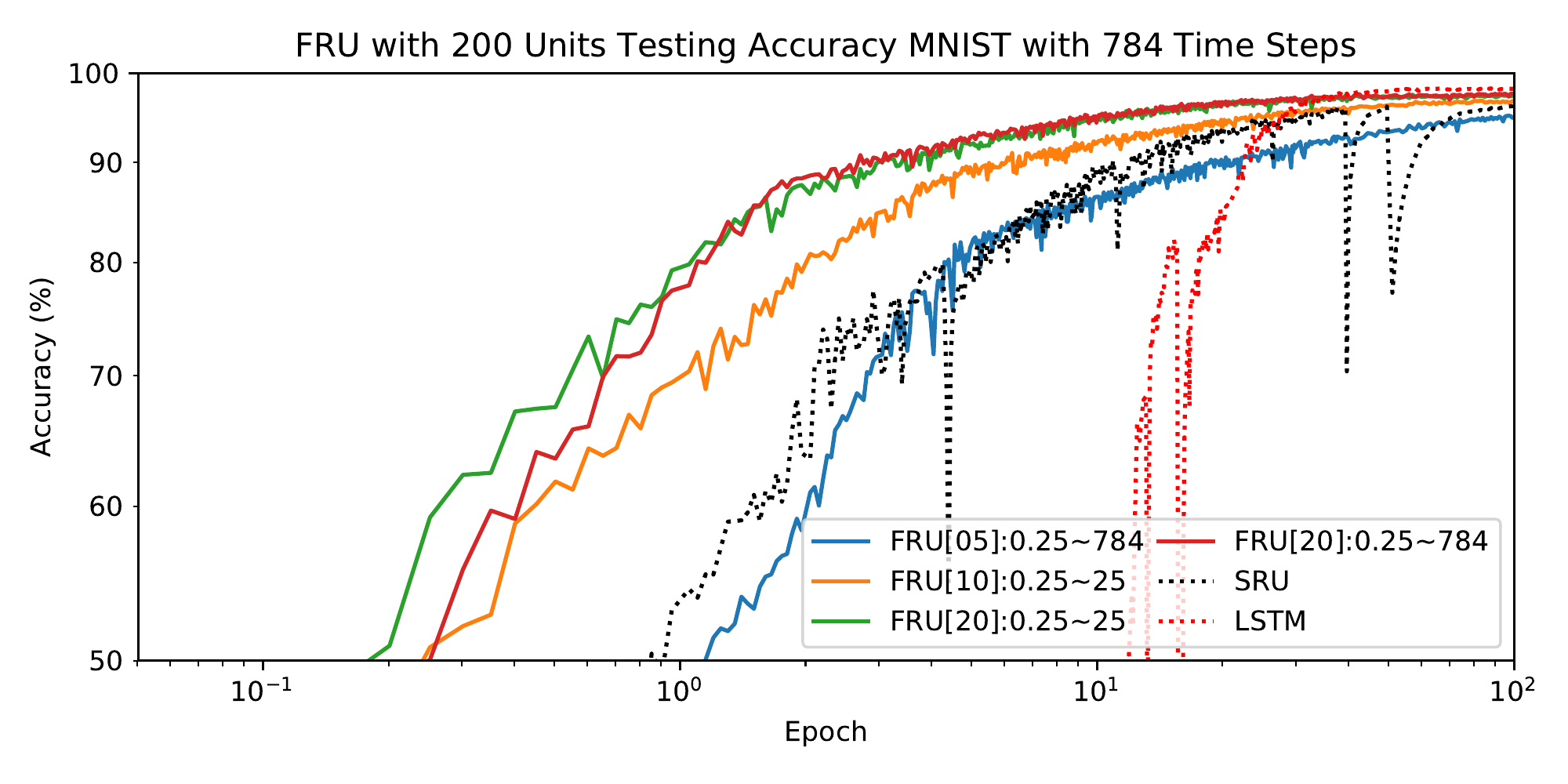}}
    \caption{Training and testing accuracy of different frequency combinations for FRU with 200 units on MNIST. Legend entry for FRU denotes FRU[\#freqs]:min freq$\sim$max freq. Frequencies are uniformly sampled in log space.}
    \label{fig:mnist_784_FRU_freqs_clean}
\end{figure*}

Fig.~\ref{fig:mnist_784_FRU_freqs_clean} shows the accuracy for different frequencies of FRU on MNIST dataset with 784 time steps. 
It is also observed that more frequencies provide higher accuracy and faster convergence, 
while frequency range of 0.25$\sim$25 gives similar accuracy to that of 0.25$\sim$784. 
This indicates that frequencies at relatively low ranges are already able to fit the data well.

\begin{algorithm}[!h]
\caption{Sin Synthetic Data for Fig.~\ref{fig:sin5_synthetic}}
\label{alg:sin5_synthetic}
\begin{algorithmic}[1]
\Procedure{\textsc{GenerateSinSyntheticData}}{$N,T,d$} \Comment{$N$ denotes the size, $T$ denotes the length, and $d$ denotes the number of frequencies} 

    \State $k \leftarrow 5$
    \For{$j=1 \to d$}
        \State Sample $f_j$ from $[0.1, 3]$ uniformly at random \Comment{generate frequencies}
        \State Sample $\theta_j$ from $[-1,1]$ uniformly at random \Comment{generate phases}
    \EndFor
    \For{$i=1 \to k$}
        \For{$j=1 \to d$}
            \State Sample $a_{i,j}$ from $[-1,1]$ uniformly at random \Comment{generate coefficients}
        \EndFor
    \EndFor
    \For {$l=1 \to N$ } 
        \For{$i = 1 \to k$}
            \State    Sample $\delta_i$ from $\mathcal{N}(0,0.1)$  \Comment{generate mixture rate}
            \State    Sample $b_i$ from $\mathcal{N}(0,0.1)$  \Comment{generate mixture bias}
        \EndFor 
    \For{$t=1 \to T$}
        \State $x_{l,t} \leftarrow \sum_{i=1}^k \delta_i \cdot \sum_{j=1}^{d} a_{i,j} \sin( \frac{ 2 \pi f_j (t-0.5T)}{0.5T} + 2 \pi \theta_j) + b_i$ \Comment{mix $k$ components}
    \EndFor
    \EndFor
    \State \Return $x$ \Comment{$x\in \R^{N\times T}$}
\EndProcedure
\end{algorithmic}
\end{algorithm}

\begin{algorithm}[!h]
\caption{Polynomial Synthetic Data for Fig.~\ref{fig:poly_synthetic}}
\label{alg:poly_synthetic}
\begin{algorithmic}[1]
\Procedure{\textsc{GeneratePolySyntheticData}}{$N,T,d$} \Comment{$N$ denotes the size, $T$ denotes the length, and $d$ denotes the degree of polynomial} 
    \State $k \leftarrow 5$
    \For{$i=1 \to k$}
        \For{$j=1 \to d$}
            \State Sample $a_{i,j}$ from $[-1,1]$ uniformly at random \Comment{generate coefficient}
        \EndFor
    \EndFor
    \For {$l = 1 \to N$ } 
            \For{$i =1 \to k$}
    \State    Sample $\delta_i$ from $\mathcal{N}(0,0.1)$ \Comment{generate mixture rate}
    \State    Sample $b_i$ from $\mathcal{N}(0,0.1)$ \Comment{generate mixture bias}
            \EndFor
    \For{$t=1 \to T$}
        \State $x_{l,t} \leftarrow \sum_{i=1}^k \delta_i \cdot \sum_{j=1}^{d} a_{i,j}(\frac{t-0.5T}{0.5T})^j + b_i$ \Comment{mix $k$ polynomials}
    \EndFor
    \EndFor
    \State \Return $x$ \Comment{$x\in \R^{N\times T}$}
\EndProcedure
\end{algorithmic}
\end{algorithm}

Alg.~\ref{alg:sin5_synthetic} and Alg.~\ref{alg:poly_synthetic} give the pseudocode for generation of mix-sin and mix-poly synthetic datasets. 
The procedures for mix-sin and mix-poly are similar, while both datasets are combinations of multiple sine/polynomial curves, respectively. 
For the mix-sin dataset, amplitudes, frequencies and phases of each curve are randomly generated. 
For the mix-poly dataset, amplitudes are randomly generated for each degree of each curve. 
In the mixing process, different curves are mixed with random mixture rates and biases.

\section{Acknowledgments}
The authors would like to thank Danqi Chen, Yu Feng, Rasmus Kyng, Eric Price, Tianxiao Shen, and Huacheng Yu for useful discussions. 









\end{document}